\theoremstyle{plain}
\newtheorem{theorem}{Theorem}[section]
\theoremstyle{definition}
\theoremstyle{remark}
\newtheorem{remark}[theorem]{Remark}
\icmltitlerunning{Image generation with shortest path diffusion}
\begin{document}

\twocolumn[
\icmltitle{Image generation with shortest path diffusion}



\icmlsetsymbol{equal}{*}

\begin{icmlauthorlist}
\icmlauthor{Ayan Das}{equal,comp}
\icmlauthor{Stathi Fotiadis}{equal,comp,imp}
\icmlauthor{Anil Batra}{comp,edin}
\icmlauthor{Farhang Nabiei}{comp}
\icmlauthor{FengTing Liao}{comp}
\icmlauthor{Sattar Vakili}{comp}
\icmlauthor{Da-shan Shiu}{comp}
\icmlauthor{Alberto Bernacchia}{comp}
\end{icmlauthorlist}

\icmlaffiliation{comp}{MediaTek Research, Cambourne, UK}
\icmlaffiliation{imp}{Department of Bioengineering, Imperial College London, London, UK}
\icmlaffiliation{edin}{School of Informatics, University of Edinburgh, Edinburgh, UK}

\icmlcorrespondingauthor{Ayan Das}{ayan.das@mtkresearch.com}


\vskip 0.3in
]



%
\printAffiliationsAndNotice{\icmlEqualContribution} 

\begin{abstract}
The field of image generation has made significant progress thanks to the introduction of Diffusion Models, which learn to progressively reverse a given image corruption.
Recently, a few studies introduced alternative ways of corrupting images in Diffusion Models, with an emphasis on blurring.
However, these studies are purely empirical and it remains unclear what is the optimal procedure for corrupting an image.
In this work, we hypothesize that the optimal procedure minimizes the length of the path taken when corrupting an image towards a given final state.
We propose the Fisher metric for the path length, measured in the space of probability distributions.
We compute the shortest path according to this metric, and we show that it corresponds to a combination of image sharpening, rather than blurring, and noise deblurring.
While the corruption was chosen arbitrarily in previous work, our Shortest Path Diffusion (SPD) determines uniquely the entire spatiotemporal structure of the corruption.
We show that SPD improves on strong baselines without any hyperparameter tuning, and outperforms all previous Diffusion Models based on image blurring.
Furthermore, any small deviation from the shortest path leads to worse performance, suggesting that SPD provides the optimal procedure to corrupt images.
Our work sheds new light on observations made in recent works and provides a new approach to improve diffusion models on images and other types of data.
\end{abstract}

\begin{figure}[!th]
  \centering
     \includegraphics[width=\linewidth]{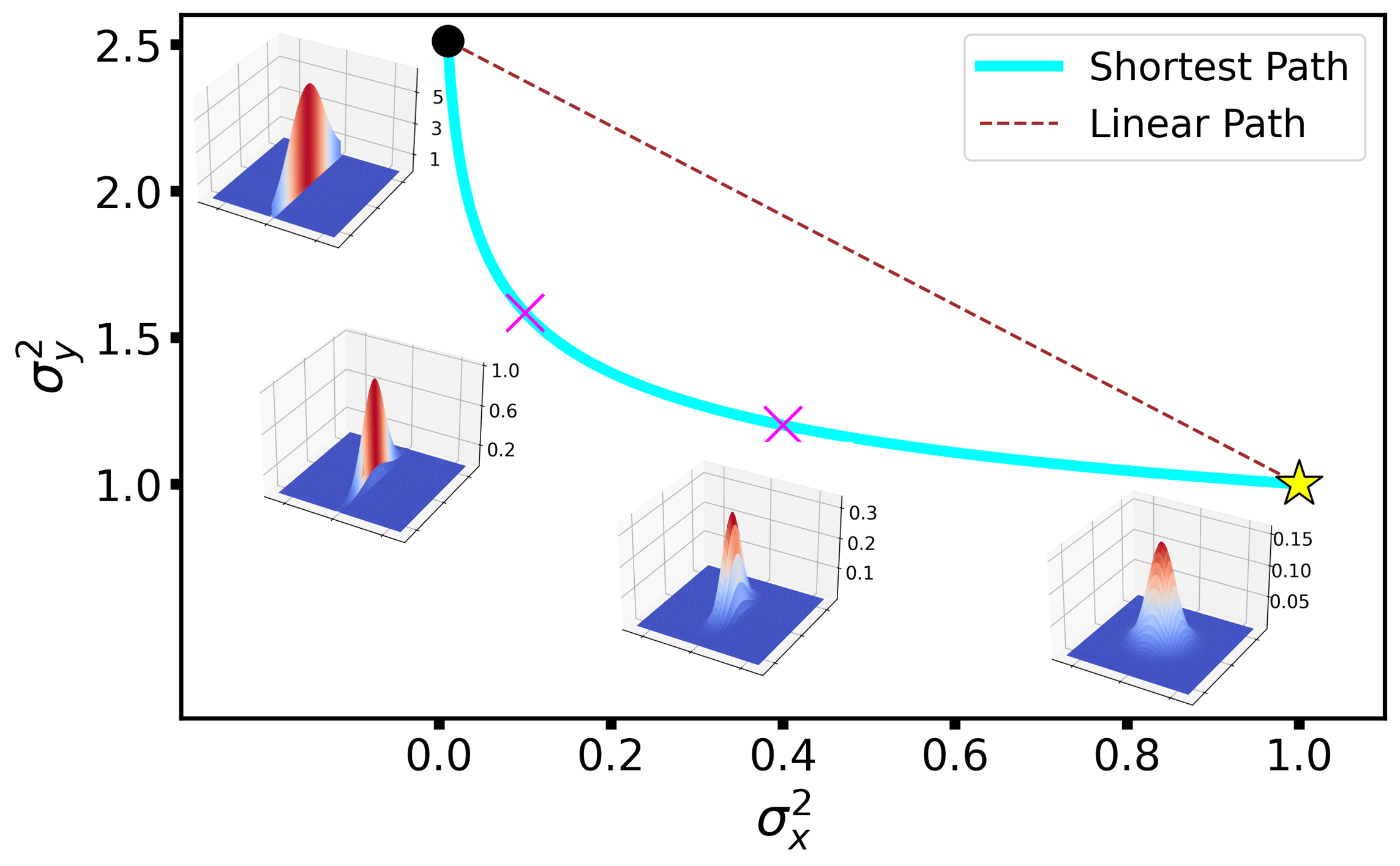}
   \includegraphics[width=\linewidth]{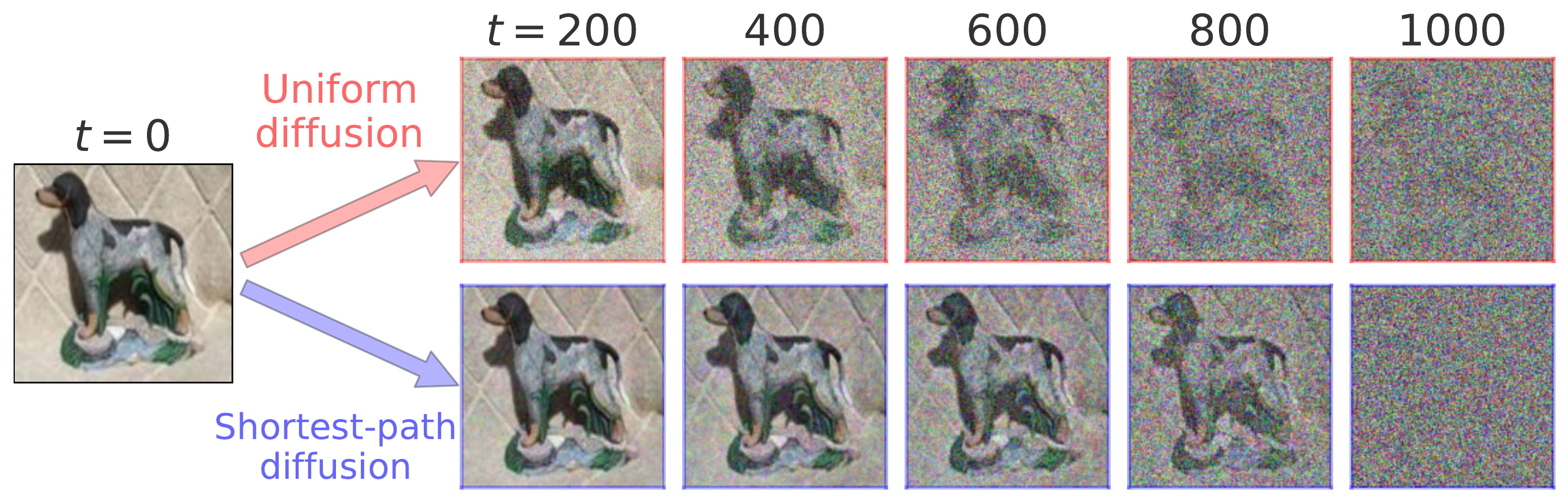}
   \caption{\textbf{Illustration of shortest path}. 
   \textbf{Top}: Transformation of a bivariate Gaussian distribution, parameterized by the variance along two orthogonal directions, $\sigma_x^2$ and $\sigma_y^2$. 
   The initial distribution has $\sigma_x^2\ll \sigma_y^2$ (black circle), while the final distribution has $\sigma_x^2= \sigma_y^2$ (yellow star, isotropic noise). 
   According to the Fisher metric, the shortest path between the two distributions is \emph{not} the linear path (dashed line), instead is given by the curved path (cyan curve), in which $\sigma_y^2$ decreases first, and $\sigma_x^2$ increases later. 
   \textbf{Bottom}: Comparison of shortest path and uniform noising for image corruption. In uniform noising, the original image dissipates while uniform noise appears. Instead, the shortest path corresponds to image sharpening and noise deblurring. Lower frequencies of the image dissipate before higher frequencies. Similarly, noise appears at lower frequencies first and higher frequencies later. Figure \ref{fig:speed} shows how signal and noise change in time for different frequencies.}
   \label{fig:path}
\end{figure}

\section{Introduction}
\label{sec:intro}

The field of image generation has seen rapid progress since the introduction of algorithms based on deep learning \cite{9555209}.
A common approach is using a deep neural network to map input noise into an output image, a fast process that requires a single forward pass. 
These methods include Generative Adversarial Networks \cite{goodfellow2020generative, karras2020analyzing}, which provide good image quality, Variational Autoencoders \cite{kingma2014auto, child2020very} and Normalizing Flows \cite{dinh2016density, chen2019residual}, which provide a rich diversity of sampled images.
Other approaches based on deep learning include Autoregressive Models \cite{van2016pixel, child2019generating}, which generate one pixel (or patch) at a time.

Diffusion models \cite{sohl2015deep, song2019generative, ho2020denoising} represent an alternative family of algorithms outperforming previous approaches both in terms of quality \cite{dhariwal2021diffusion} and diversity \cite{kingma2021variational}.
Similar to previous approaches, diffusion models transform input noise into an output image.
However, instead of generating an image by a single forward pass through a neural network, diffusion models use multiple steps of denoising, which require multiple forward passes.
This iterative procedure allows refining an image to unprecedented quality.
The combination of diffusion and language models led to impressive progress in text-to-image generation \cite{saharia2022photorealistic, nichol2021glide}.

Recent work questioned the procedure for corrupting images in diffusion models.
Early work proposed using noise \cite{sohl2015deep, song2019generative, ho2020denoising}, but recent studies explored alternative procedures, with a strong focus on image blurring \cite{rissanen2022generative, lee2022progressive, bansal2022cold, daras2022soft, hoogeboom2022blurring}. In all previous work, corruptions are chosen arbitrarily and it remains unclear what is the optimal procedure for corruption. In this work, we provide a candidate for the optimal procedure. We reason that the main sources of errors in diffusion models are the approximations made in reversing the corruption. Therefore, the optimal corruption procedure would be one that minimizes those errors.  

A procedure for corrupting images in a diffusion model is equivalent to a transformation of the data distribution into another probability distribution, which is often taken to be an isotropic Gaussian. For a given parameterization of the probability distributions, this transformation can be visualized by a path in the space of parameters, from the parameter values of the data distribution to those of an isotropic Gaussian (see figure \ref{fig:path}).
Any given procedure for corruption corresponds to a path in the space of distributions.
We hypothesize that the optimal procedure for corruption corresponds to the shortest path.
The intuition is that any error made in approximating the true reversal of the corruption, accumulated along the path, would be smaller if the path is shorter. 

To compute the path length, a metric in the space of distributions needs to be defined.
We choose the Fisher Information Matrix as metric, because it bounds the precision of maximum likelihood estimation \cite{amari2016information}, and Diffusion Models are trained by using a bound on the likelihood as the loss function \cite{sohl2015deep, ho2020denoising}.
Furthermore, the Fisher metric is reparameterization invariant, namely any length computed by the metric does not depend on the choice made for parameterizing the distributions \cite{amari2016information}.
We contribute the following:

\begin{itemize}
    \item We compute analytically the shortest path between Gaussian distributions, and we propose an approximation for the non-Gaussian case (e.g. image data).
    \item We show that the shortest path corresponds to a combination of image sharpening and noise deblurring. We provide the exact formula for the specific corruption prescribed by the shortest path.
    \item We test our Shortest Path Diffusion (SPD) on CIFAR10. We show that any small departure from the shortest path results in worse performance, and SPD outperforms all methods based on image blurring. Our results suggest that SPD provides the optimal corruption.
    \item We also test SPD on ImagNet $64\times64$, on the task of \emph{unconditional} generation, and we show that SPD improves on strong baselines without any hyperparameter tuning. 
\end{itemize}

\section{Related work}
\label{sec:related}

\subsection{Diffusion Models}

Diffusion probabilistic models were introduced by \citet{sohl2015deep} following a line of research on Markov chain-based generative models \cite{bengio2014deep, salimans2015markov}. 
The work of \citet{song2019generative, song2020improved} proposed an algorithm for image generation based on learning the score of the data distribution.
The work of \citet{ho2020denoising} pointed out the equivalence between diffusion and score-based generative models, and showed the capability of these models in generating high-quality images. The two approaches were unified by the framework of stochastic differential equations \cite{song2020score}. The work of \citet{song2020denoising} introduced non-Markovian diffusion models, allowing deterministic and faster sampling.

\subsection{Variance schedule}

One of the first avenues to improve the performance of Diffusion Models was adjusting the temporal properties of the corruption process, also known as variance schedule, while its spatial properties remained  fixed to isotropic noise.
Early work used a linear or exponential increase of variance.
The work of \citet{iddpm} introduced a cosine function of time, which enabled better quality of generated images.
The work of \citet{kingma2021variational} showed that learning the variance schedule improves performance on image density estimation benchmarks.
In all previous work, variance scheduling was chosen arbitrarily.
In our work instead, the entire spatio-temporal properties of the corruption are determined by the shortest path.

\subsection{Image blurring}

Most of the previous work on Diffusion Models fixed the spatial corruption to isotropic noise.
However, a few recent studies introduced alternative image corruptions, in which different frequencies are degraded at different times \cite{rissanen2022generative, lee2022progressive, bansal2022cold, daras2022soft, hoogeboom2022blurring}.
When higher frequencies are degraded first, the observed effect is image blurring. The work of \cite{rissanen2022generative} proposes simulating the heat equation, given the equivalence of heat dissipation and Gaussian blurring.
The work of \citet{hoogeboom2022blurring} combines heat dissipation and additive noise, formalizing it as a diffusion process with anisotropic noise.
The work of \citet{lee2022progressive} introduced Gaussian blur with monotonically increasing power while following the variance schedule of \citet{ho2020denoising}. 
The works of \citet{bansal2022cold} and \citet{daras2022soft} extended Diffusion Models to a wide variety of corruptions, including Gaussian blur. 
In all these studies, the choice of corruption is arbitrary.
The justification for using image blurring is that low-frequency features are more important for human perception of images.
In our work instead, we show that the shortest path corresponds to image sharpening and noise deblurring.

\subsection{Reverse process}

Diffusion models may differ in the parametrization of the reverse process.
A neural network may be trained to either predict the noise or the original image. 
The work of \citet{ho2020denoising} found that predicting noise results in better performance. 
However, recent results show that using a combination of the two parametrizations may improve the quality of generated images \cite{benny2022dynamic}. 
The work of \citet{salimans2022progressive} used progressive distillation to skip iterations in the reverse process, and showed that predicting the original image instead of noise achieves the best sampling performance after distillation. 
We use noise prediction in our work, similar to \citet{ho2020denoising}, but other parameterizations may be implemented as well.

The work of \citet{ma2022accelerating} introduced a matrix pre-conditioning method to accelarete the reverse process in score-based models. 
The work of \citet{bao2022analytic, bao2022estimating} proposed learning the optimal covariance of the reverse process and showed significant improvements on image quality.
Recently, \citet{lu2022dpmsolver} simplified the expression of the reverse diffusion and significantly improved the speed of generation and quality of images.
These features could be added to SPD in the future and are likely to provide further improvements.

\subsection{Other improvements}

The work of \citet{guth2022wavelet} uses orthogonal wavelets to decompose the images and generates samples in the space of wavelets. In our work, we use a Fourier basis instead of a wavelet basis, and our noising procedure is non-uniform. The work of \citet{lee2022priorgrad} replaces the isotropic Gaussian prior with a distribution of mean and covariance computed on the data. Similar to our work, this approach may shorten the trajectory between the dataset and the prior, but it may not correspond to the shortest path.
The work of \citet{khrulkov2022understanding} shows that diffusion models implement the optimal transport from the data to the target distribution. However, our work is concerned with optimality of the trajectory, rather than of the mapping between the initial and final state.

Other improvements of Diffusion Models that are orthogonal to our work include:
The work of \citet{dockhorn2021score}, which increased the sampling speed by augmenting the diffusion process with auxiliary variables. 
The works of \citet{vahdat2021score} and \citet{jing2022subspace}, which propose diffusing in a latent space, improving the generation quality and the computational costs. The work of \citet{watson2021learning}, which designed a differentiable parametric sampler that can be optimized for fast data generation. All of these can be in principle added to our algorithm and are expected to provide further improvements.

\section{Shortest Path Diffusion}
\label{sec:theory}

In this section, we compute analytically the shortest path for Gaussian distributions, and propose an application to non-Gaussian case, in particular to natural images. 
We provide the algorithm of Shortest Path Diffusion and discuss its complexity.
Details of derivations and proofs are provided in the appendix.

\subsection{Shortest path for Gaussian distributions}
\label{shortgauss}

We consider the simple case of an image $\mathbf{x}$ distributed according to a Gaussian distribution with zero mean and covariance matrix $\Sigma$,
\begin{equation}
\mathbf{x}\sim\mathcal{N}\left(\mathbf{0},\Sigma\right)
\end{equation}
The vector $\mathbf{x}$ concatenates all pixels of an image, and the matrix $\Sigma$ includes the covariances of all pairs of pixels.
For example, a $32\times 32$ image has $1024$ pixels, thus $\mathbf{x}$ is a vector of $1024$ elements and $\Sigma$ is a $1024\times1024$ matrix.
The probability distribution is completely described by $\Sigma$, and any transformation from an initial distribution at time $t=0$ to a final distribution at time $t=T$ is described by the temporal change in covariance $\Sigma_t$.

The sequence $\Sigma_t$ describes a path in the space of probability distributions.
Given the initial $\Sigma_0$ and the final $\Sigma_T$, what is the shortest path $\Sigma_t$?
Here we choose to measure path lengths using the Fisher metric, because it provides a bound on the precision of maximum likelihood estimation of probability distributions via the Cramer-Rao theorem \cite{amari2016information}.
The reason is that Diffusion Models are trained by the Evidence Lower Bound (ELBO), which is a bound on the likelihood \cite{sohl2015deep, ho2020denoising}.
We also highlight that the Fisher metric is invariant for reparameterization of probability distributions \cite{amari2016information}, therefore any measured path length (and thus the shortest path) does not depend on the chosen parameterization of the distribution (for example, using the precision instead of the covariance, or any other invertible transformation of the covariance).

\begin{theorem}
\label{th:sp-main}
Given two Gaussian distributions with zero mean and covariance matrix equal to, respectively, $\Sigma_0$ and $\Sigma_1$, where $\Sigma_1$ is non-singular.
Given the Riemannian metric defined by the Fisher information, the shortest path between the two distributions is given by
\begin{equation}
\Sigma_t=\Sigma_1^{1/2}\left(\Sigma_1^{-1/2}\Sigma_0\Sigma_1^{-1/2}\right)^{1-t}\Sigma_1^{1/2}
\end{equation}
where $t\in(0,1)$ measures the relative distance travelled along the path.

\end{theorem}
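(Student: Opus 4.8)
The plan is to recognize that the space of zero-mean Gaussians, parameterized by the covariance $\Sigma$, is a Riemannian manifold whose Fisher metric is (up to a constant factor) the standard affine-invariant metric on the cone of symmetric positive-definite matrices. Concretely, I would first compute the Fisher information for $\mathcal{N}(\mathbf 0,\Sigma)$, obtaining that the squared line element is $ds^2 \propto \mathrm{tr}\!\left(\Sigma^{-1}\,d\Sigma\,\Sigma^{-1}\,d\Sigma\right)$. This is a classical calculation: differentiate the log-density twice in $\Sigma$ and take the expectation, using $\mathbb E[\mathbf x\mathbf x^\top]=\Sigma$ and standard Wishart-type fourth-moment identities. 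The key qualitative fact to extract is that this metric is exactly the affine-invariant / Rao metric on $\mathrm{SPD}(n)$, for which the geodesics are already known in closed form.

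Next I would invoke (or re-derive) the geodesic formula for the affine-invariant metric: the geodesic from $A$ to $B$ in $\mathrm{SPD}(n)$ is $\gamma(t) = A^{1/2}\bigl(A^{-1/2}BA^{-1/2}\bigr)^{t}A^{1/2}$. To keep the argument self-contained I would verify this in two steps. First, handle the special case $A = I$: geodesics through the identity are $\gamma(t) = \exp(tM)$ for symmetric $M$ (this follows because, at the identity, the metric reduces to the flat trace inner product on symmetric matrices and the exponential map is the matrix exponential — or one can simply check the geodesic ODE $\ddot\gamma = \dot\gamma\,\gamma^{-1}\dot\gamma$ is satisfied by $e^{tM}$). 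Second, use the fact that the congruence action $\Sigma \mapsto G\Sigma G^\top$ is an isometry of the affine-invariant metric (immediate from the trace form above and the cyclic property of the trace), so it maps geodesics to geodesics; conjugating the identity-based geodesic by $G = A^{1/2}$ gives the general formula. Matching endpoints $\gamma(0)=A$, $\gamma(1)=B$ then pins down $M = \log\bigl(A^{-1/2}BA^{-1/2}\bigr)$.

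Finally I would specialize the endpoints to match the theorem's statement. We want the path to run from $\Sigma_0$ at the start to $\Sigma_1$ at the end, but the stated formula has exponent $1-t$, so $t$ is the \emph{fraction of distance remaining toward $\Sigma_1$} rather than distance travelled from $\Sigma_0$. Concretely, set $A = \Sigma_1$ and $B = \Sigma_0$ in the geodesic formula and reparameterize $t \mapsto 1-t$; since an affine change of the geodesic parameter preserves the property of being a (constant-speed) geodesic, this yields exactly
\[
\Sigma_t = \Sigma_1^{1/2}\bigl(\Sigma_1^{-1/2}\Sigma_0\Sigma_1^{-1/2}\bigr)^{1-t}\Sigma_1^{1/2},
\]
which satisfies $\Sigma_0$ at $t=1$... — I would double-check the endpoint convention here against Figure \ref{fig:path}, where $t$ runs from the data distribution to the isotropic Gaussian, so in fact $t=0$ should recover $\Sigma_0$; plugging in gives $\Sigma_1^{1/2}(\Sigma_1^{-1/2}\Sigma_0\Sigma_1^{-1/2})\Sigma_1^{1/2}=\Sigma_0$ and $t=1$ gives $\Sigma_1$, as required. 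The non-singularity hypothesis on $\Sigma_1$ is exactly what is needed for $\Sigma_1^{-1/2}$ and the matrix power to be well defined, and $\Sigma_0$ is allowed to be singular because it only ever appears inside a positive fractional power with $1-t>0$.

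The main obstacle is the first step — correctly deriving that the Fisher metric on Gaussians equals the affine-invariant metric on $\mathrm{SPD}(n)$, including the correct handling of the fact that $d\Sigma$ ranges over symmetric matrices (so one must be slightly careful with the combinatorial constants in the fourth-moment computation). Once the metric is identified, the geodesic computation is standard Riemannian geometry of symmetric cones; the only remaining care is bookkeeping of the endpoint/parameter convention, which is cosmetic rather than substantive.
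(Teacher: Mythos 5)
Your proposal is correct and follows essentially the same route as the paper's proof: compute the Fisher metric on zero-mean Gaussians as the affine-invariant trace form $\mathrm{Tr}\left(\Sigma^{-1}d\Sigma\,\Sigma^{-1}d\Sigma\right)$, pass to the geodesic ODE $\ddot\Sigma=\dot\Sigma\,\Sigma^{-1}\dot\Sigma$, and exploit invariance under the congruence $\Sigma\mapsto G\Sigma G^\dagger$. The only cosmetic difference is that the paper simultaneously "diagonalizes" the two endpoints (writing $\Sigma_0=FDF^\dagger$, $\Sigma_1=FF^\dagger$ with $F=\Sigma_1^{1/2}U$) and solves decoupled scalar ODEs, whereas you verify the exponential geodesic $e^{tM}$ through the identity directly and conjugate by $\Sigma_1^{1/2}$, matching endpoints via $M=\log\bigl(\Sigma_1^{-1/2}\Sigma_0\Sigma_1^{-1/2}\bigr)$ -- the same congruence argument in slightly different form, and at the same level of rigor (both establish stationarity rather than global minimality).
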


Proof is provided in appendix \ref{app:fisher proof} (see also \citet{pinele2020fisher}).
For numerical purposes, we implement a discrete-time version of the shortest path, with $t=0,1,2, ..., T$.
Consistent with previous work, we set the final distribution as isotropic Gaussian, thus the final covariance is equal to $\Sigma_T=\mbox{I}$ (identity matrix) for any initial covariance $\Sigma_0$.
In the shortest path, each eigenvalue $\sigma^2$ of the covariance matrix, representing the variance of a given combination of pixels, evolves according to
\begin{equation}
\label{eq:SPvar}
\sigma^2_t=\left(\sigma^2_0\right)^{1-t/T}
\end{equation}
where $\sigma_0^2$ is the initial variance.
Figure \ref{fig:path} (top) shows an example with two variances, where the large one drops first and the smaller one raises later.
We stress that the metric is not Euclidean therefore the linear path in figure \ref{fig:path} is not the shortest path.
The exponential dependence on time in equation \ref{eq:SPvar} implies that the rate of change depends on the initial variance: small $\sigma_0^2$ change slowly, while large $\sigma_0^2$ change faster.

In matrix form, the shortest path corresponds to the following covariance schedule
\begin{equation}
\label{eq:SP}
\Sigma_t=FD^{1-t/T}F^\dagger
\end{equation}
where $F$ is the matrix of orthogonal eigenvectors of $\Sigma_0$ and $D$ is the diagonal matrix of positive eigenvalues of $\Sigma_0$.
Here $\dagger$ denotes the operations of matrix transpose and complex conjugation, and $D^{1-t/T}$ is the diagonal matrix where each element is raised to the power of $1-t/T$.

Equation \ref{eq:SP} describes the shortest path between a Gaussian distribution with covariance $\Sigma_0$ and an isotropic Gaussian.
In the context of diffusion models, this is a corruption procedure that starts from the data distribution, which has a rich structure described by the covariance $\Sigma_0$, and terminates with pure noise (isotropic Gaussian).
This corruption procedure is implemented by a forward process that corrupts individual images sampled from the data distribution \cite{ho2020denoising}.
In the next section, we derive a corruption procedure implementing the shortest path.

\subsection{Image corruption} \label{image-corruption}

The following theorem provides a data corruption procedure implementing the shortest path of equation \ref{eq:SP}.

\begin{theorem}
Given a random vector $\mathbf{x}_0$ of zero mean and covariance $\Sigma_0$, and another random vector ${\boldsymbol \epsilon}_t$ of zero mean and covariance $\mbox{\emph{I}}$ (isotropic), where $\mathbf{x}_0$ and ${\boldsymbol \epsilon}_t$ are uncorrelated.
Assume the matrix $(\mbox{I}-\Sigma_0)$ is invertible.
Define the matrix $\Phi_t = (\mbox{I} -\Sigma_0^{1-t})(\mbox{I}-\Sigma_0)^{-1}$ and note that, for $t\in(0,1)$, $\Phi_t$ and $(\mbox{\emph{I}}-\Phi_t)$ are positive definite and, respectively, monotonically decreasing and increasing functions of $\Sigma_0$.
Then, the corrupted vector $\mathbf{x}_t$ defined by
\begin{equation}
\label{eq:FP}
    \mathbf{x}_t = \Phi_t^{\frac{1}{2}} \mathbf{x}_0 + (\mbox{I}-\Phi_t)^{\frac{1}{2}} {\boldsymbol \epsilon}_t
\end{equation}
has zero mean and covariance equal to $\Sigma_0^{1-t}$.

\end{theorem}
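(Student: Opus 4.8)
The plan is to reduce the matrix statement to a one-dimensional identity by exploiting that $\Phi_t$ is, by construction, a function of $\Sigma_0$ alone. Since $\Sigma_0$ is a real symmetric positive semidefinite covariance matrix, I would start from its spectral decomposition $\Sigma_0 = F D F^\dagger$, with $F$ orthogonal and $D=\mathrm{diag}(\sigma_1^2,\dots,\sigma_n^2)$, $\sigma_i^2\ge 0$. Every matrix appearing in the statement — $\Sigma_0^{1-t}$, $(\mbox{I}-\Sigma_0)^{-1}$, $\Phi_t=(\mbox{I}-\Sigma_0^{1-t})(\mbox{I}-\Sigma_0)^{-1}$, and $\mbox{I}-\Phi_t$ — is obtained from $\Sigma_0$ by applying a scalar function in the eigenbasis $F$, so they all share the eigenvectors $F$ and commute with one another and with $\Sigma_0$. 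The eigenvalue of $\Phi_t$ attached to $\sigma_i^2$ is $\phi_t(\sigma_i^2)=\big(1-(\sigma_i^2)^{1-t}\big)\big/(1-\sigma_i^2)$, which is well defined precisely because $(\mbox{I}-\Sigma_0)$ is assumed invertible (so no eigenvalue equals $1$).

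Before the main computation I would dispatch the parenthetical claim, which is exactly what makes the square roots in \eqref{eq:FP} meaningful. For $t\in(0,1)$ and every eigenvalue $s=\sigma_i^2>0$ one has $0<\phi_t(s)<1$ (with $\phi_t(0)=1$ in the degenerate case), so $\Phi_t$ and $\mbox{I}-\Phi_t$ are positive (semi)definite. This follows because $u\mapsto u^{1-t}$ is strictly increasing and strictly concave on $(0,\infty)$, hence $u^{1-t}>u$ on $(0,1)$ and $u^{1-t}<u$ on $(1,\infty)$; comparing numerator and denominator of $\phi_t$ in each regime gives the bounds. Differentiating $\phi_t(s)$ with respect to $s$ shows it is strictly decreasing, hence $\mbox{I}-\Phi_t$ is an increasing function of $\Sigma_0$, which is the stated monotonicity. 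As a byproduct $\Phi_t^{1/2}$ and $(\mbox{I}-\Phi_t)^{1/2}$ are well defined, symmetric, and still commute with $\Sigma_0$.

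The mean is immediate from linearity of expectation and $\mathbb{E}[\mathbf{x}_0]=\mathbb{E}[{\boldsymbol\epsilon}_t]=\mathbf{0}$. For the covariance I would expand $\mathrm{Cov}(\mathbf{x}_t)=\mathbb{E}[\mathbf{x}_t\mathbf{x}_t^\dagger]$ into four terms; the two cross terms are $\Phi_t^{1/2}\,\mathbb{E}[\mathbf{x}_0{\boldsymbol\epsilon}_t^\dagger]\,(\mbox{I}-\Phi_t)^{1/2}$ and its transpose, both zero since $\mathbf{x}_0$ and ${\boldsymbol\epsilon}_t$ are centered and uncorrelated. What remains is $\Phi_t^{1/2}\Sigma_0\Phi_t^{1/2}+(\mbox{I}-\Phi_t)$; using commutativity this collapses to $\Phi_t\Sigma_0+\mbox{I}-\Phi_t=\mbox{I}-\Phi_t(\mbox{I}-\Sigma_0)$. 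Substituting the definition of $\Phi_t$, which gives $\Phi_t(\mbox{I}-\Sigma_0)=\mbox{I}-\Sigma_0^{1-t}$, yields $\mathrm{Cov}(\mathbf{x}_t)=\Sigma_0^{1-t}$. Equivalently, one can read off the scalar identity $\phi_t(s)\,s+(1-\phi_t(s))=s^{1-t}$ and lift it back through $F$.

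The computation itself is routine algebra; the only part needing genuine care — and what I would treat as the main obstacle — is the functional-calculus bookkeeping: keeping all powers, inverses and square roots expressed in the common eigenbasis of $\Sigma_0$ so that the matrices provably commute (this is exactly what lets $\Phi_t^{1/2}\Sigma_0\Phi_t^{1/2}$ become $\Phi_t\Sigma_0$), and handling consistently the degenerate eigenvalue $s=0$ (where $\phi_t=1$) as well as the eigenvalue $s=1$ that the invertibility hypothesis rules out.
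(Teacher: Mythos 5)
Your proposal is correct and follows essentially the same route as the paper's proof: cross terms vanish by uncorrelatedness, and since $\Phi_t$ is a function of $\Sigma_0$ it commutes with it, so $\Phi_t^{1/2}\Sigma_0\Phi_t^{1/2}+(\mbox{I}-\Phi_t)=\mbox{I}-\Phi_t(\mbox{I}-\Sigma_0)=\Sigma_0^{1-t}$. The only difference is that you additionally spell out the eigenbasis bookkeeping and verify the positivity and monotonicity of $\Phi_t$ and $\mbox{I}-\Phi_t$, which the paper asserts without proof; this is a welcome but inessential addition.
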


Proof is provided in appendix \ref{app:forward process}.
Therefore, shortest path is implemented by corrupting images from $\mathbf{x}_0$ to $\mathbf{x}_T$ according to equation \ref{eq:FP}, where ${\boldsymbol \epsilon}_t$ is sampled from an isotropic Gaussian.
$\Phi_t$ is a matrix, thus the image $\mathbf{x}_0$ is corrupted by a linear transformation (instead of a simple rescaling \cite{ho2020denoising}).
The noise ${\boldsymbol \epsilon}_t$ is also linearly transformed.
The linear transform $\Phi_t$ is equal to
\begin{equation}
\label{eq:phi}
    \Phi_t = (\mbox{I}-\Sigma_0^{1-t/T})(\mbox{I}-\Sigma_0)^{-1}.
\end{equation}
%

Equation \ref{eq:FP} is similar to the forward process of a few recent studies \cite{rissanen2022generative, lee2022progressive, bansal2022cold, daras2022soft, hoogeboom2022blurring}.
However, those studies picked an arbitrary form of the matrix $\Phi_t$ and tried to optimize it empirically.
Instead, our work provides an optimal form, given by equation \ref{eq:phi}.

\subsection{Application to real images}
\label{sec:natural}

The distribution of real images is not Gaussian, therefore the shortest path of section \ref{shortgauss} does not apply.
However, its covariance matrix $\Sigma_0$ has a rich structure describing the second-order statistics of all pairs of pixels.
We propose to approximate the shortest path between the distribution of real images and an isotropic Gaussian by corrupting images with the forward process of equations \ref{eq:FP}, \ref{eq:phi}.
We note that, even if the distribution of real images is not Gaussian, the forward process \ref{eq:FP}, \ref{eq:phi} still implies the covariance schedule \ref{eq:SP}.
For non-Gaussian distributions, it is unknown whether the shortest path has covariance schedule \ref{eq:SP}, but we hypothesize that this forward process provides a good approximation to the true shortest path.

The application of equations \ref{eq:FP}, \ref{eq:phi} to real images requires computing the covariance matrix $\Sigma_0$ of their distribution.
However, this computation may be expensive, for example $1024\times1024$ images have a $1049600\times1049600$ covariance matrix.
Fortunately, the form of the covariance matrix of translation invariant distributions is known to be equal to
\begin{equation}
\label{eq:sigma0}
\Sigma_0=FDF^\dagger
\end{equation}
where $F$ is the 2-dimensional Discrete Fourier Transform (DFT) matrix, and $D$ is a diagonal matrix with the power spectrum of the data.
We work under the assumption that natural images are approximately translation invariant (this may not apply to certain datasets, e.g. centered faces, CelebA).
Therefore, the eigenvectors of $\Sigma_0$ are given by the DFT matrix, and its eigenvalues are given by the power spectrum.

The power spectrum of natural images is also known to decrease with the squared frequency  norm \cite{hyvarinen2009natural}. Thus, we model the power spectrum with the following equation
%
%

%
\begin{equation}
\label{eq:pow}
D_{ii} = \frac{c_1}{|c_2+f_i|^m}
\end{equation}

where $f_i$ is the frequency corresponding to index $i$, and is equal to the norm of the vector of frequencies along the horizontal and vertical axes of an image
\begin{equation}
\label{eq:pow-freq}
f = \sqrt{f_x^2+f_y^2}
\end{equation}
 We set the exponent $m$ equal to $2$ in most of our experiments, following \cite{hyvarinen2009natural}, while we fit the constants $c_1$, $c_2$ on the empirical power spectrum of the dataset. Figure \ref{fig:pow} shows the power spectrum computed for CIFAR10 \cite{cifar10dataset} and ImageNet $64\times 64$ \cite{5206848} datasets.
We show in section \ref{sec:results} (see figure \ref{fig:exponent}) that using any values of $m$ different from $m=2$ results in worse performance, suggesting that the shortest path is the optimal procedure for corrupting images.

\subsection{Algorithm and complexity}

Training of Shortest Path Diffusion is described in algorithm \ref{alg:spd}.
We note that, in general, equation \ref{eq:FP} requires a large amount of memory and compute, due to the quadratic scaling of $\Phi_t$ with the dimension of data $d$ (number of pixels).
However, in this section we provide an implementation that scales linearly in the case of real images, which neither require computation of $F$ nor any $d\times d$ matrix multiplication. 

Similar to the work of \citet{lee2022progressive} and \citet{hoogeboom2022blurring}, we corrupt images in frequency space instead of pixel space.
We denote by $\mathbf{u}_t$ the 2-dimensional DFT of image $\mathbf{x}_t$, equal to
\begin{equation}
\label{eq:DFT}
\mathbf{u}_t=F^\dagger\mathbf{x}_t
\end{equation}
Given the transformed $\mathbf{u}_t$, we can recover the image by just using inverse Fourier transform
\begin{equation}
\label{eq:iDFT}
\mathbf{x}_t=F\mathbf{u}_t
\end{equation}
Note that the complexity of DFT is quasilinear in $d$ (log-linear). 
Application of equations \ref{eq:FP}, \ref{eq:phi} in frequency space is given by
\begin{equation}
\label{eq:FPfreq}
\mathbf{u}_t = \Psi_t^{\frac{1}{2}} \mathbf{u}_0 + (\mbox{I}-\Psi_t)^{\frac{1}{2}}{\boldsymbol \xi}_t
\end{equation}
\begin{equation}
\label{eq:phifreq}
    \Psi_t = (\mbox{I}-D^{1-t/T})(\mbox{I}-D)^{-1}
\end{equation}
where ${\boldsymbol \xi}_t$ is noise in frequency space, ${\boldsymbol \xi}_t=F^\dagger{\boldsymbol \epsilon}_t$.
The matrix $\Psi$ in equation \ref{eq:phifreq} is diagonal, thus equation \ref{eq:FPfreq} can be implemented by element-wise multiplication and does not require any $d\times d$ matrix multiplication.
Fitting the power spectrum of the data also scales linearly (see appendix \ref{app:ps-time}), thus overall complexity of the algorithm is linear in $d$.

Algorithm \ref{alg:spd} implements a batch size equal to one, but it is straightforward to implement it for larger batch sizes.
We use the \emph{simple} loss function defined in \citet{ho2020denoising}, which trains a neural network $g_\theta$ to estimate the mapping $\hat{{\boldsymbol \epsilon}}_t=g_\theta(\mathbf{x_t})$.

\begin{algorithm}[tb]
   \caption{ Shortest Path Diffusion (batch size = $1$)}
   \label{alg:spd}
\begin{algorithmic}
   \STATE {\bfseries Given:} dataset and randomly initialized network $g_\theta$
   \STATE Compute power spectrum of dataset
   \STATE Fit $c_1, c_2$ on power spectrum with model (\ref{eq:pow})
   \STATE Compute optimal filter $\Psi_t$ for all $t=1:T$ (\ref{eq:phifreq})
   \WHILE{not converged}
   \STATE Sample $\mathbf{x}_0$ from dataset and compute its DFT $\mathbf{u}_0$
   \STATE Sample $t$ uniformly in $1:T$
   \STATE Sample noise ${\boldsymbol \epsilon}_t$ and compute its DFT $\boldsymbol{\xi}_t$
   \STATE Compute corrupted $\mathbf{u}_t$ (\ref{eq:FPfreq}) and its inverse DFT $\mathbf{x}_t$
   \STATE One-step optimization of $\theta$ with loss$(g_\theta(\mathbf{x}_t),{\boldsymbol \epsilon}_t)$
   \ENDWHILE
\end{algorithmic}
\end{algorithm}


\subsection{Image generation}

Algorithm \ref{alg:image gen} describes the algorithm for image generation (reverse process).
For generation of images we essentially follow \citet{ho2020denoising}.
However, similar to the forward process, the reverse process also runs in frequency space, as in recent works \cite{lee2022progressive, hoogeboom2022blurring}.
After training a neural network on the mapping
$\hat{{\boldsymbol \epsilon}}_t=g_\theta(\mathbf{x_t})$ by Shortest Path Diffusion, we use it to approximate the reverse process according to
\begin{equation}
\label{eq:phirev}
\begin{split}
&\mathbf{u}_{t-1}=\Psi_t^{-\frac{1}{2}}\Psi_{t-1}^{\frac{1}{2}}\mathbf{u}_t+\sigma_t F^\dagger \mathbf{z}_t\\
&-\Psi_t^{-\frac{1}{2}}\Psi_{t-1}^{\frac{1}{2}}(\mbox{I}-\Psi_t\Psi_{t-1}^{-1})(\mbox{I}-\Psi_t)^{-\frac{1}{2}}F^\dagger g_\theta(\mathbf{x}_t)
\end{split}
\end{equation}
where $\Psi_t$ is diagonal and applies element-wise, $\mathbf{z}_t$ is isotropic Gaussian noise and $\sigma_t$ is chosen depending on $T$ (also diagonal, see section \ref{sec:experiments}).

While the neural network operates in pixel space, we use DFT to compute the transformed estimate and run the reverse process in frequency space.
This allows using previously successful neural network architectures, which are known to operate well in pixel space.


\begin{algorithm}[tb]
   \caption{Image generation (reverse process)}
   \label{alg:image gen}
\begin{algorithmic}
   \STATE {\bfseries Given:} trained neural network $g_\theta$ and optimal filter $\Psi_t$
   \STATE Set noise $\sigma_t$  for all $t=1:T$
   \STATE Set $t=T$
   \STATE Sample $\mathbf{x}_T\sim\mathcal{N}(0,\mbox{I})$ and compute its DFT $\mathbf{u}_T$
   \WHILE{$t>0$}
   \STATE Sample $\mathbf{z}_t\sim\mathcal{N}(0,\mbox{I})$
   \STATE Compute $\mathbf{u}_{t-1}$ (\ref{eq:phirev}) and its inverse DFT $\mathbf{x}_{t-1}$
   \STATE $t=t-1$
   \ENDWHILE
   \STATE {\bfseries Return} $\mathbf{x}_0$
\end{algorithmic}
\end{algorithm}

\def\big{\centering\includegraphics[width=\linewidth]{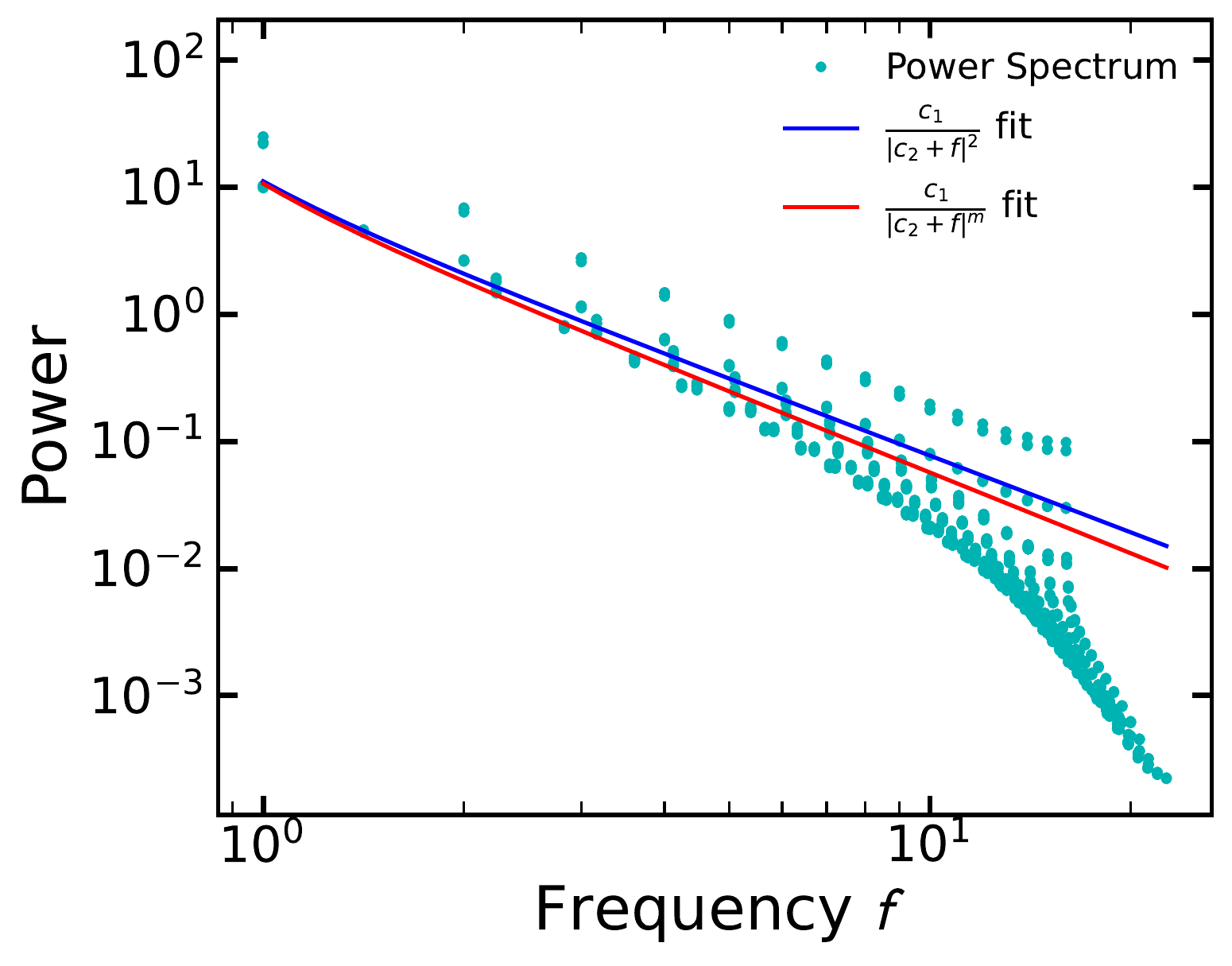}}
\def\little{\includegraphics[height=2.2cm]{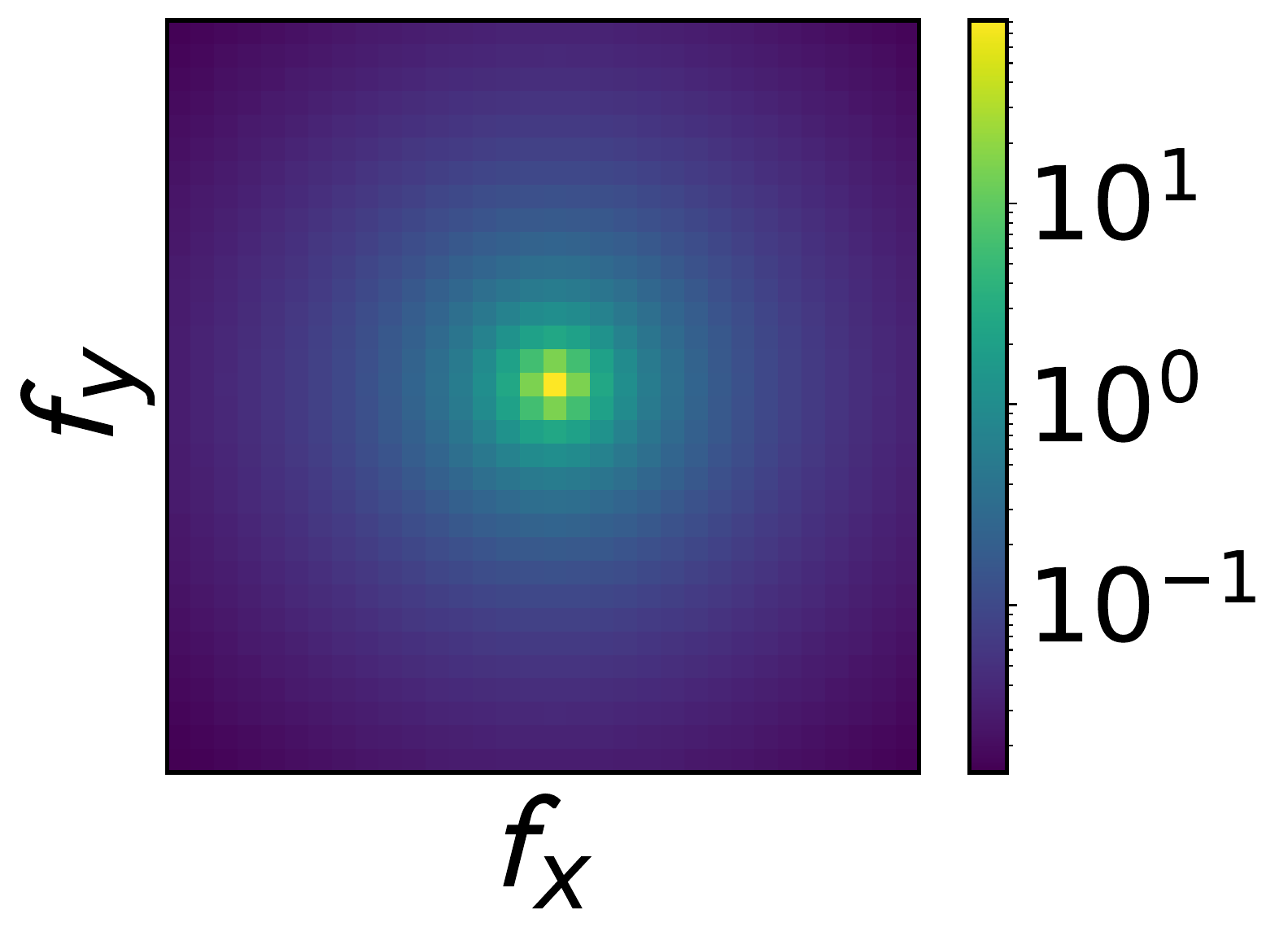}}

\begin{figure}[!th] 
\stackinset{c}{-27pt}{c}{-27pt}{\little}{\big}
\includegraphics[width=\linewidth]{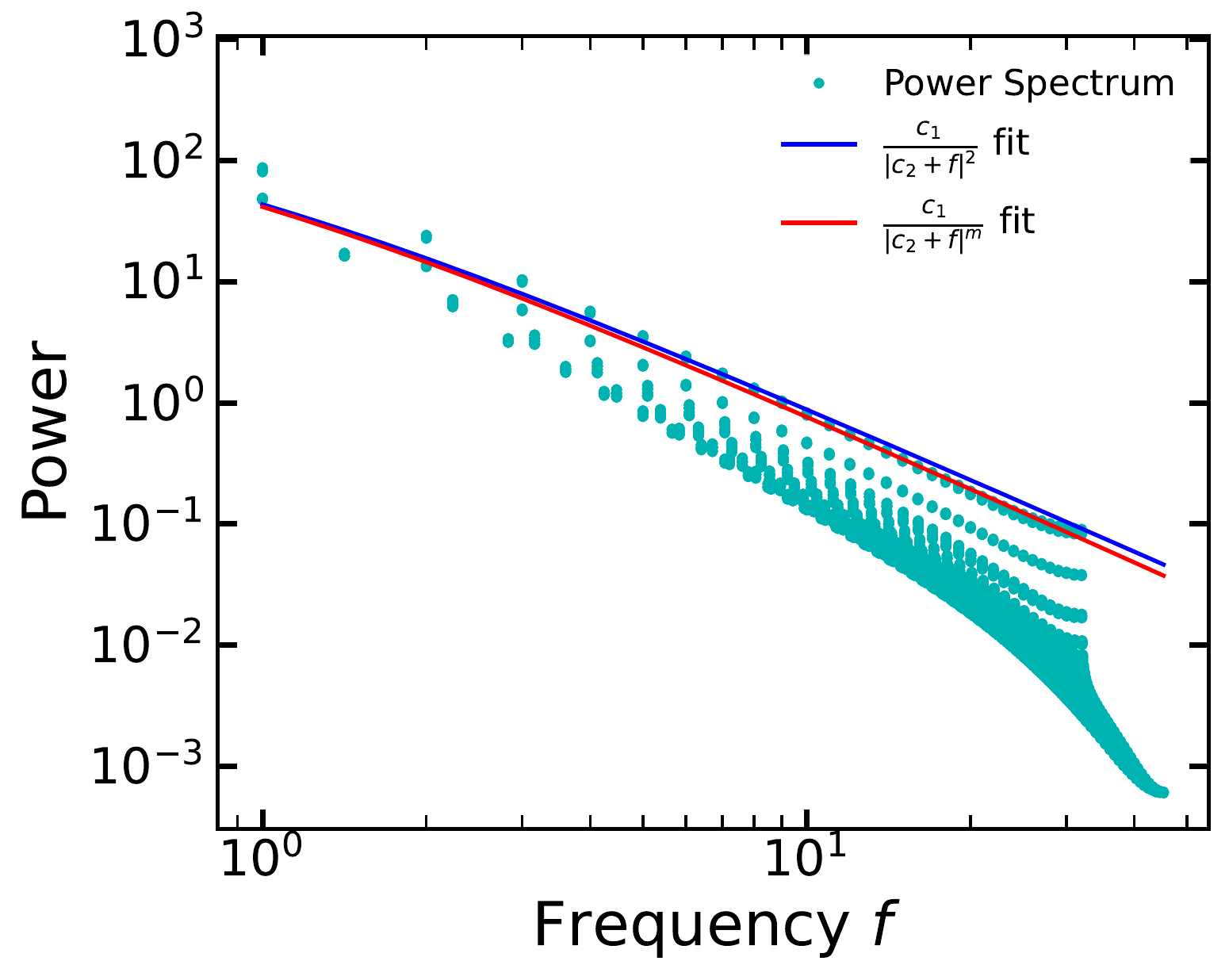}
\caption{\textbf{Power spectrum and model fit}. Dots show the empirical power spectrum, lines are fits provided by the model in equation \ref{eq:pow}.
\textbf{Top}: CIFAR10 power spectrum, with $c_1=7.7$ and $c_2=-0.3$. Inset: The 2-dimensional power spectrum obtained from the fit, as a function of the horizontal ($f_x$) and vertical ($f_y$) frequency of images.
\textbf{Bottom}: ImageNet 64x64 power spectrum, with $c_1=96.79$ and $c_2=0.49$. 
We also fit a model $\sim1/f^m$, finding $m=2.1$ for CIFAR10 and $m=2.05$ for ImageNet (while fixing $c_2=0.49$), suggesting that the inverse square model is accurate.}
    \label{fig:pow}
\end{figure}

\begin{figure}[t]
  \centering
  
  \includegraphics[width=\linewidth]{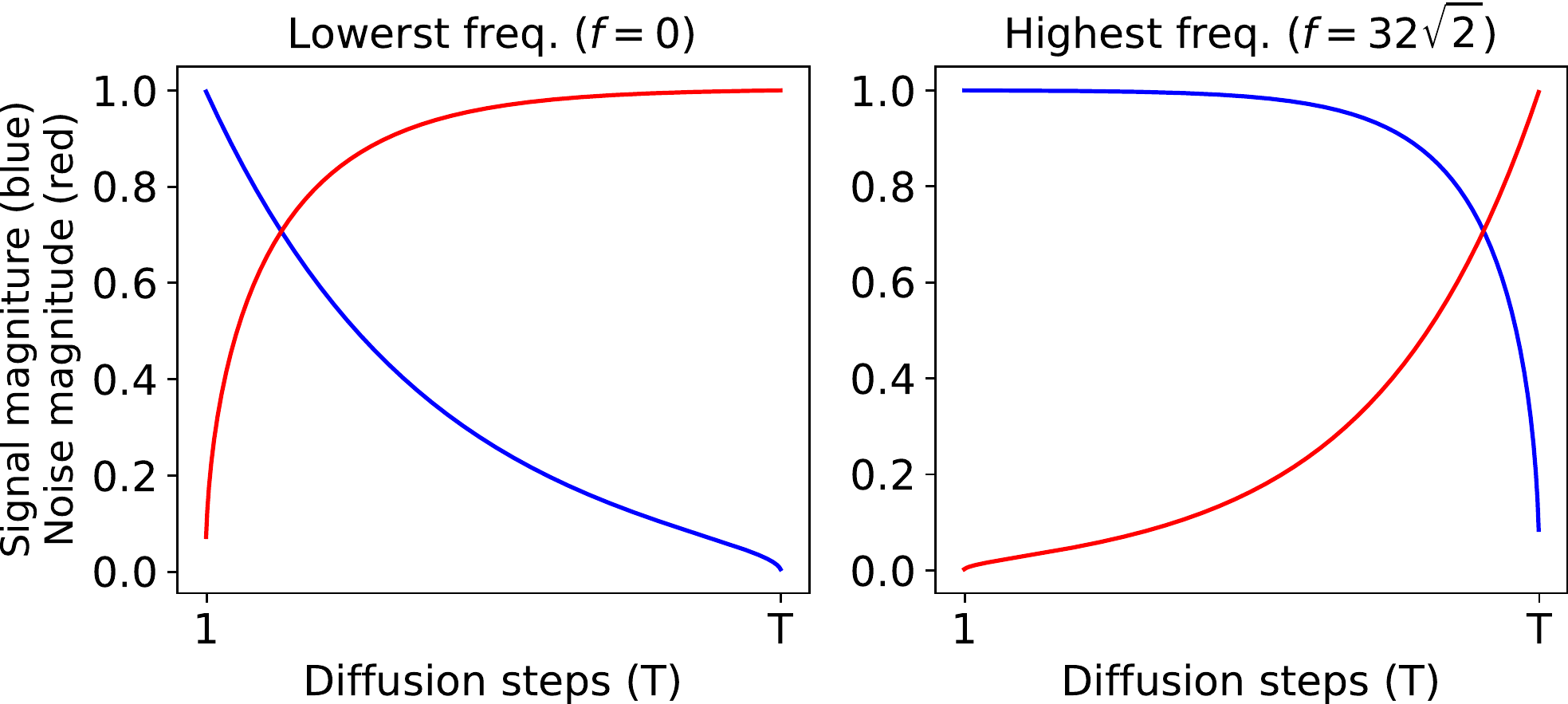}

   \caption{\textbf{Temporal dynamics of corruption for different frequencies}.
   Low frequencies change earlier (\textbf{left}), while high frequencies change later (\textbf{right}) during the corruption procedure of CIFAR10 images, for both the image (signal, blue) and the noise (red). Figure \ref{fig:path} (bottom) shows the corruption of an example image.
}
   \label{fig:speed}
\end{figure}

\section{Experiments}
\label{sec:experiments}

In this section, we validate empirically our proposed Shortest Path Diffusion (SPD) on unconditional image generation. 

We use algorithm \ref{alg:spd} for training and algorithm \ref{alg:image gen} for generating images, as described in section \ref{sec:theory}.
We conduct a range of experiments and show that the shortest path leads to the best quality of generated images in comparison to similar methods. 
Our code is available at \url{https://github.com/mtkresearch/shortest-path-diffusion}

\subsection{Dataset and metrics}

Here we describe the dataset and metrics used for our experimentation.
We use CIFAR10 \cite{cifar10dataset} and ImageNet \cite{5206848}, two of the most frequently used benchmarks for evaluating generative models on images.
CIFAR10 has resolution of $32\times 32$ pixels (dimension $d=1024$), while for ImageNet we use images scaled to $64\times 64$ resolution (dimension $d=4096$).
For both datasets, we only consider the task of \emph{unconditional} image generation.

Individual pixels are re-scaled to the range of $[-1, 1]$ following the usual practice in the literature \cite{ho2020denoising,dhariwal2021diffusion}.
We evaluate the quality of generated images by Fr\'ecet Inception Distance (FID) \cite{fid_metric}. 
We use standard practice for evaluating FID, comparing generated samples with real data and using the same Inception checkpoints as in \citet{iddpm,dhariwal2021diffusion}.
We use $50,000$ samples for CIFAR10 and $10,000$ samples for ImageNet $64 \times 64$, following \citet{iddpm}.

\subsection{Power spectrum}
\label{sec:pow}

The optimal corruption filter of SPD is obtained by the power spectrum of the dataset.
We compute the power spectrum of each image in the training set and we average the power spectrum across all images, separately for each channel.
Figure \ref{fig:pow} shows the power spectrum against frequency for CIFAR10 (top) and ImageNet $64\times64$ (bottom).
For each value of the horizontal axis we obtain multiple values of the power spectrum, corresponding to different channels and different directions of the frequency vector.
Note that frequencies are 2-dimensional vectors, the horizontal axes of figure \ref{fig:pow} show their norm.

We fit parameters $c_1$ and $c_2$ of the model in equation \ref{eq:pow} using least squares regression. 
For CIFAR10, We obtain $c_1=7.7$ and $c_2=-0.3$, while for ImageNet $64\times64$ we obtain $c_1=96.79$ and $c_2=0.49$. 
These values are used to compute the optimal corruption filter by equation \ref{eq:phifreq}.
Also shown in figure \ref{fig:pow}, we fit the exponent $m$ in equation \ref{eq:pow}, obtaining a value of $2.1$ for CIFAR10 and $2.05$ for ImageNet $64\times64$.
This confirms that the inverse square law, i.e. $m=2$, is a good model of the spectrum of natural images \cite{hyvarinen2009natural}.

\subsection{Optimal corruption filter}
\label{sec:optcor}

In this section, we investigate how images are affected by the optimal corruption filter obtained in section \ref{sec:pow}.
The linear filter $\Psi_t$ progressively dissipates the original image through equation \ref{eq:FPfreq}, but different frequencies of the original image dissipate at different times.
Similarly, different frequencies of the noise perturb the image at different times.

Figure \ref{fig:speed} shows the temporal change of signal and noise at different frequencies during corruption of CIFAR10 images.
We observe that lower frequencies dissipate first and higher frequencies dissipate later. 
Simultaneously, lower frequencies of the noise appear first, while higher frequencies appear later.
This corresponds to image sharpening and noise deblurring, and is a general property of the shortest path because equation \ref{eq:pow} is a decreasing function of $f$ and equation \ref{eq:phifreq} is a decreasing function of $D$ (for $t\in[1,N]$).
Figure \ref{fig:path} (bottom) shows the corruption of an example image.

We highlight that Shortest Path Diffusion completely determines the change of signal and noise during image corruption.
All previous studies have arbitrarily set a variety of schedules for signal and noise and tried to hyper-optimize them.
In our work, the signal and noise schedules are fixed by the optimal spatio-temporal filter $\Psi_t$.




\subsection{Training and sampling}

We use a slight modification of the codebase in \citet{dhariwal2021diffusion}.
The only difference is the corruption procedure, that we implement according to our SPD algorithm, equipped with the optimal corruption filter obtained in section \ref{sec:pow}.
For the neural network $g_{\theta}$, we use the same variant of UNet as in \cite{dhariwal2021diffusion}, without any modification. 
We optimize parameters $\theta$ by minimizing the \emph{simple} loss \cite{ho2020denoising}. 
We used Adam optimizer with learning rate of $1\times 10^{-4}$.

For CIFAR10, we use batch size $1024$, $150,000$ training iterations, and we record model checkpoints every $5,000$ iterations.
For ImageNet $64\times 64$, we use batch size $336$, $1$M training iterations, and we record model checkpoints every $3,000$ iterations.
We report the best FID score across checkpoints.
Similar to \citet{ho2020denoising}, for generating images we use Eq.~\ref{eq:phirev} with $\sigma_t = (\mbox{I}-\Psi_t\Psi_{t-1}^{-1})$ for $T>300$ and $\sigma_t = (\mbox{I}-\Psi_{t-1})(\mbox{I}-\Psi_t)^{-1}(\mbox{I}-\Psi_t\Psi_{t-1}^{-1})$ for $T\leq 300$, where $T$ is the number of diffusion steps. 

We compare SPD with other methods running on the same codebase: iDDPM and iDDPM+DDIM \cite{nichol2021improved, song2020denoising}.
Our implementation of iDDPM runs on the codebase of \citet{dhariwal2021diffusion} and gives slightly better results than the original \cite{nichol2021improved}.
We re-train SPD and iDDPM for each different value of $T$, while iDDPM+DDIM uses the deterministic DDIM sampler and is trained once at $T=4,000$ \cite{song2020denoising}.

\begin{figure}[!th]
    \centering
    \includegraphics[width=1.\linewidth]{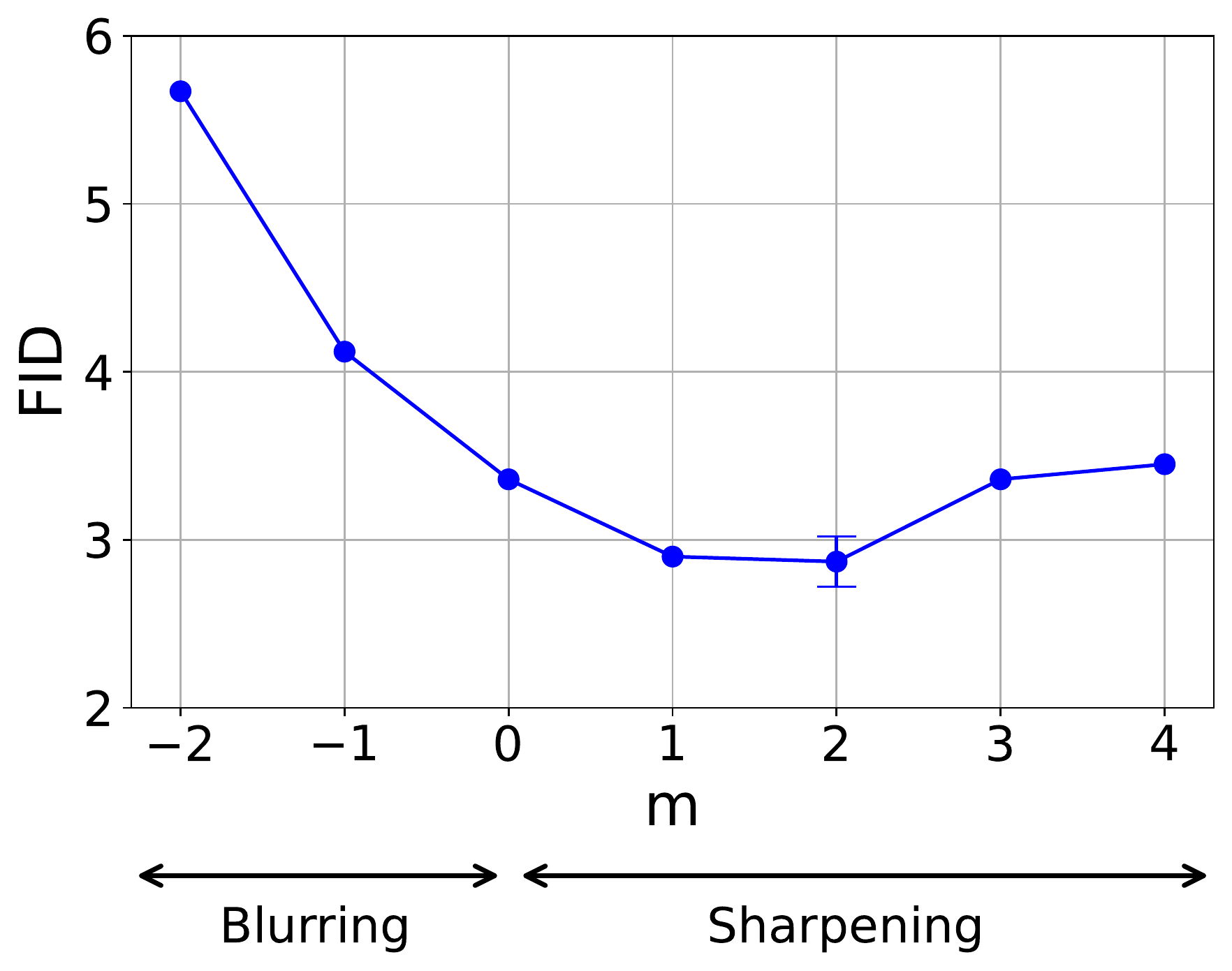}
    \caption{\textbf{Any deviation from Shortest Path Diffusion deteriorates quality of CIFAR10 images}. Image quality is measured by FID (lower is better). SPD corresponds to the value $m=2$ as shown in the power spectrum of figure \ref{fig:pow}. For comparison, we also run other corruptions, corresponding to other $m$ values. Negative and positive values of exponent $m$ result, respectively, in image blurring and image sharpening, while $m=0$ corresponds to uniform noising of all the frequencies. We found that image quality is worse in all other cases, suggesting that SPD provides the optimal corruption. We used $T=500$ diffusion timesteps in all experiments. We run $5$ experiments with different initialization for $m=2$, where standard deviation is shown, while we have single runs for all other values of $m$.}
    \label{fig:exponent}
\end{figure}

\begin{figure}[t]
    \centering
    \includegraphics[width=1.\linewidth]{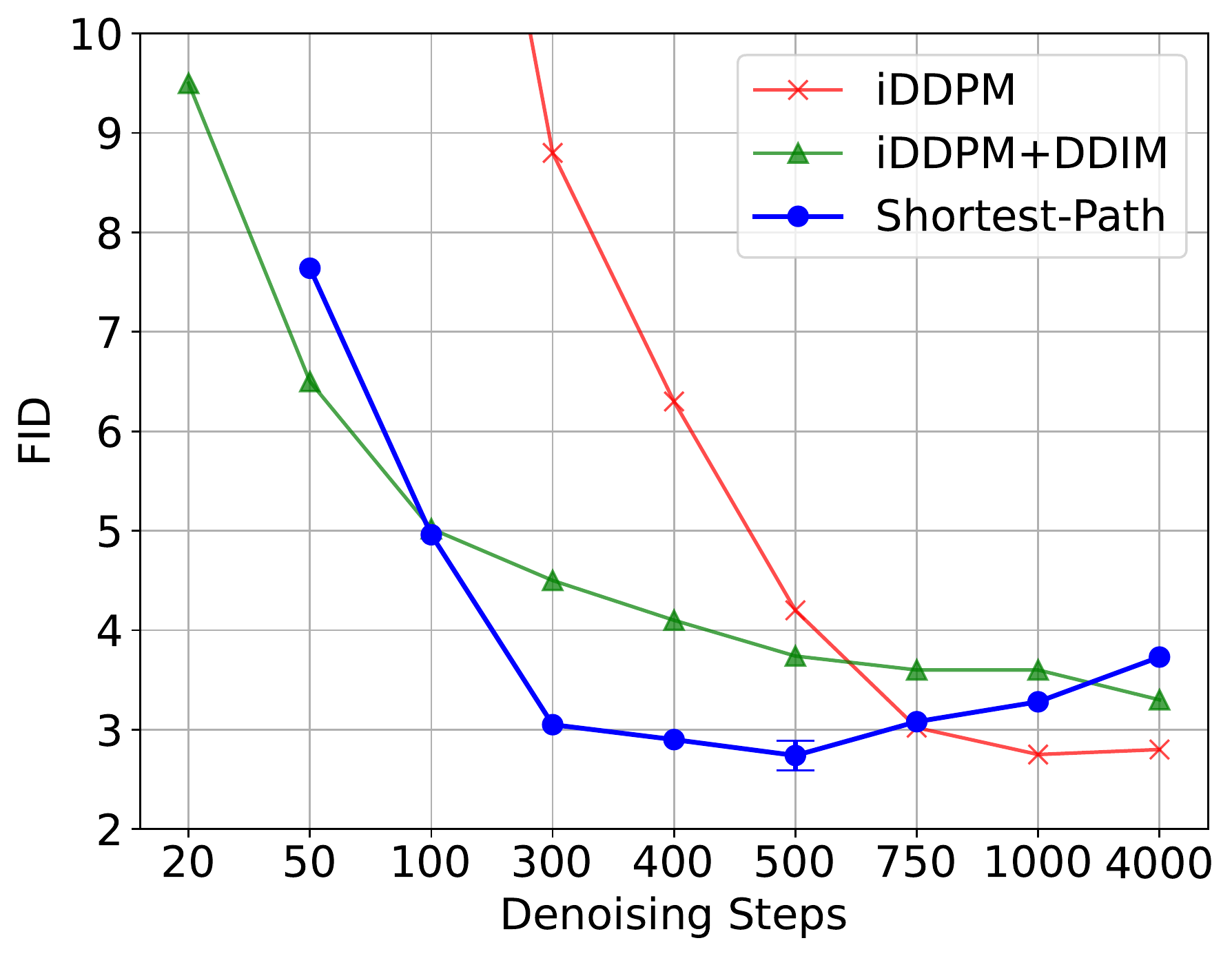}
    \caption{\textbf{CIFAR10 image quality against total diffusion timesteps}. Image quality is measured by FID (lower is better). All algorithms (iDDPM, iDDPM+DDIM, SPD) run in the same codebase. The only difference between SPD and iDDPM is the corruption procedure. SPD outperforms iDDPM+DDIM in the interval of $100 \leq T \leq 1000$ and iDDPM for $T \leq 500$.}
    \label{fig:graph}
\end{figure}

\begin{table}[b]
\centering
\caption{\textbf{Comparison of SPD with methods based on image blurring, for CIFAR10 dataset}. Image quality is measured by FID (lower is better). SPD uses $T=500$ diffusion timesteps, for all other methods we show the best FID reported by the authors in \citet{daras2022soft} (Soft Diffusion) and \citet{hoogeboom2022blurring} (Blurring Diffusion).}
\vskip 0.15in
\begin{tabular}{l|r}
\hline
Methods   & FID   \\ \hline
Soft Diffusion              & 4.64  \\
Blurring Diffusion  & 3.17   \\
\textbf{SPD (Ours)}                              & \textbf{2.74}  \\ \hline
\end{tabular}

\label{tab:overall_comparison}
\end{table}

\subsection{Results}
\label{sec:results}

First, we test our main hypothesis that Shortest Path Diffusion provides the optimal corruption.
As discussed in section \ref{sec:natural}, the optimal SPD filter depends on the power spectrum of the data, in particular the exponent $m$ in the model of equation \ref{eq:pow}, which is equal to $m=2$ for natural images.
According to our hypothesis, any other value of this exponent should result in worse performance, because it would determine a different filter and therefore a different corruption procedure. To test the optimality of shortest path, we changed $m$ in the range $[-2, 4]$ for models trained on CIFAR10. Negative and positive values of the exponent $m$ result, respectively, in image blurring and image sharpening, while $m=0$ corresponds to uniform noising of all the frequencies.
Figure \ref{fig:exponent} shows that the best performance is obtained for $m=2$, as predicted by our hypothesis.
We only test a subset of possible values for $m$, but results suggest that $m=2$ is nearly optimal.
To obtain filters at different values of $m$, we fixed $c_2$ and we set $c_1$ for each value of $m$ such that the noise variance for all filters at half-time of the forward process is equal. 


We also compare our best SPD model with other similar approaches, specifically, methods with forward noising processes containing blurring or a mixture of blurring and noising. We show in table~\ref{tab:overall_comparison} that SPD outperforms all of other methods.
Again, these other approaches provide only a small subset of all possible corruption filters, but our hypothesis that the optimal SPD filter provides the best corruption procedure still stands.
We stress that, although all methods in table~\ref{tab:overall_comparison} corrupt frequencies at different speeds, SPD sharpens images instead of blurring them.

\begin{table}[b]
\centering
\caption{\textbf{\emph{Unconditional} generation of ImageNet 64x64 images}. FID evaluations are based on 10,000 generated samples (lower is better). Results for iDDPM are copied from \citet{iddpm}. We stress that these numbers correspond to \emph{unconditional} generation, which are higher (worse) than FIDs for \emph{conditional} generation reported in other studies.}
\vskip 0.15in
\begin{tabular}{c c c c}
\hline
Methods   & Diffusion steps & Training steps & FID   \\ \hline
iDDPM  & 4000 & 1.5M & 19.2   \\
\textbf{SPD (Ours)} & 1000 & 1M & \textbf{13.7}  \\ \hline
\end{tabular}

\label{tab:imagenet_comparison}
\end{table}

In figure \ref{fig:graph}, we evaluate SPD across different values of the number of total diffusion timesteps $T$.
This number is particularly important since a smaller $T$ allows generating images faster.
We stress that, in this implementation, the only difference between SPD and iDDPM is the corruption procedure, everything else is equal (codebase, hyperparameters, computing machines).
SPD provides good image quality in a wide range of values of $T$, suggesting that it is resilient against changes of $T$ values, and outperforms iDDPM when $T \leq 500$. 
For very large $T$, we do not expect SPD to give any advantage because the errors of the reverse process may in principle vanish. 
These expectations are confirmed by observing that the advantage of SPD with respect to iDDPM occurs especially at lower values of $T$. 

We also compare SPD against iDDPM+DDIM \cite{song2020denoising}, because the latter is expected to provide better results than iDDPM at smaller $T$ values.
SPD outperforms iDDPM+DDIM for a range of values $100 \leq T \leq 1000$. 
Although SPD does not outperform iDDPM+DDIM below $T<100$, the FID for both methods are poor enough to be discarded. 
Figure \ref{fig:qual_results} in appendix \ref{app:gen images} shows example images generated by SPD, iDDPM and iDDPM+DDIM.

Lastly, in table \ref{tab:imagenet_comparison} we show quality of \emph{unconditional} generation of ImageNet $64\times64$  images, in comparison with iDDPM \cite{iddpm}. 
Our model outperforms iDDPM despite having a lower number of diffusion steps and training for fewer iterations.
We stress that FIDs in table \ref{tab:imagenet_comparison} correspond to \emph{unconditional} generation, which are higher (worse) than FIDs for \emph{conditional} generation reported in other studies.
Figure \ref{fig:IN64images} in appendix \ref{app:gen images} shows example images generated from the model.

\section{Conclusions}
\label{sec:conclusions}

We introduced Shortest Path Diffusion (SPD), a Diffusion Model providing a unique procedure for data corruption.
Previous work explored different procedures for corrupting data and tried to optimize them empirically.
Instead, we argue that SPD provides the optimal corruption since it minimizes the length of the path taken by the corruption in the space of probability distributions.
Although we do not provide any proof of optimality, we argue that taking the shortest path may reduce the effect of errors in estimating the reverse transition probabilities. 
Interestingly, while previous work explored image blurring, instead we found that image sharpening provides better results.

In contrast to previous work, the corruption of SPD is data-dependent, thus SPD provides the flexibility of adjusting the corruption to the given dataset (but see \citet{lee2022priorgrad} for a similar approach).
Furthermore, SPD can be applied not only to images but also to other types of data.
However, different types of data may require a slightly different treatment in order to make SPD feasible.
In general, SPD requires computing the full covariance matrix of the data, which scales quadratically with its dimension.
However, this complexity can be reduced to linear if a strong prior on the form of the covariance is given.

For natural images, whose distribution is approximately translation invariant, SPD requires computing the power spectrum of the data, which scales linearly with either the dimension or the size of dataset.
The same approach may apply to several other image datasets (e.g. LSUN).
Furthermore, a similar approach may apply to non-image data that is nevertheless translation-invariant, such as audio and speech data.
However, other types of data may require a different treatment, including non-translation invariant image datasets (e.g. CelebA).

A limitation of our work is that the shortest path is computed in closed form for Gaussian distributions only, while most distributions of interest, including real images, are not Gaussian.
We hypothesized that the shortest path for real images could be approximated by that of a Gaussian distribution with the same covariance.
We also chose the Fisher metric to compute the shortest path, but other choices are possible (e.g. Wasserstein metric, see \citet{khrulkov2022understanding}).
We found strong empirical support for the assumptions of this study, but more research is required to test them further, for example on higher-resolution images, other types of data or metrics.

SPD mostly concerns the corruption procedure, it is orthogonal to studies that improved Diffusion Models by other means, and those studies may be used to further improve SPD.
Those include, for example, using dedicated ODE solvers for sampling from the model \cite{lu2022dpmsolver}, more accurate estimation of the covariance of the reverse transition probability \cite{bao2022analytic, bao2022estimating}, using auxiliary variables \cite{dockhorn2021score}, learning the optimal reverse steps \cite{salimans2022progressive}, and diffusing in latent space \cite{vahdat2021score,jing2022subspace}.
Some of this studies obtain FID scores lower (better) than our work, but we believe that they could be improved further by implementing our proposed SPD corruption. 
We believe that SPD provides a useful tool to advance the progress of Diffusion Models in generating a variety of different types of data.

\bibliography{paper_bib}
\bibliographystyle{icml2023}

\newpage
\appendix
\onecolumn
\section{Shortest path for Gaussian distributions}
\label{app:fisher proof}

\begin{theorem}
\label{th:sp}
Given two Gaussian distributions with zero mean and covariance matrix equal to, respectively, $\Sigma_0$ and $\Sigma_1$, where $\Sigma_1$ is non-singular.
Given the Riemannian metric defined by the Fisher information, the shortest path between the two distributions is given by
\begin{equation}
\Sigma_t=\Sigma_1^{1/2}\left(\Sigma_1^{-1/2}\Sigma_0\Sigma_1^{-1/2}\right)^{1-t}\Sigma_1^{1/2}
\end{equation}
where $t\in(0,1)$ measures the relative distance travelled along the path.

\end{theorem}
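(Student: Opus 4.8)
The plan is to recognize the Fisher--Rao geometry of zero-mean Gaussians as the affine-invariant geometry of symmetric positive-definite matrices, exploit its large isometry group to normalize one endpoint to the identity, and then solve the resulting problem explicitly.

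First I would compute the Fisher metric. Writing the log-density of $\mathcal{N}(\mathbf{0},\Sigma)$ and differentiating twice in $\Sigma$, one finds that the induced line element on the space of covariances is $ds^2 = \tfrac12\operatorname{tr}\!\left((\Sigma^{-1}\,d\Sigma)^2\right)$, i.e. a constant multiple of the affine-invariant (trace) metric on the cone $\mathrm{SPD}(d)$ of symmetric positive-definite matrices. A key structural fact, which I would verify directly from this formula, is that the metric is invariant under every congruence $\Sigma \mapsto A\Sigma A^{\top}$ with $A \in GL(d)$: such maps are isometries of the manifold. Since $\Sigma_1$ is non-singular, taking $A = \Sigma_1^{-1/2}$ sends $\Sigma_1 \mapsto \mathrm{I}$ and $\Sigma_0 \mapsto \widetilde{\Sigma}_0 := \Sigma_1^{-1/2}\Sigma_0\Sigma_1^{-1/2} \in \mathrm{SPD}(d)$. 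Because isometries carry shortest paths to shortest paths, it suffices to determine the shortest path from $\widetilde{\Sigma}_0$ to $\mathrm{I}$ and then push it forward by $A^{-1} = \Sigma_1^{1/2}$.

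Next I would write down the candidate curve for the normalized problem, $\widetilde{\Sigma}_t = \widetilde{\Sigma}_0^{\,1-t} = \exp\!\left((1-t)\log\widetilde{\Sigma}_0\right)$, a smooth curve in $\mathrm{SPD}(d)$ equal to $\widetilde{\Sigma}_0$ at $t=0$ and to $\mathrm{I}$ at $t=1$. Since $\widetilde{\Sigma}_t$ commutes with $\log\widetilde{\Sigma}_0$, a one-line computation gives $\dot{\widetilde{\Sigma}}_t = -\widetilde{\Sigma}_t \log\widetilde{\Sigma}_0$ and $\ddot{\widetilde{\Sigma}}_t = \widetilde{\Sigma}_t(\log\widetilde{\Sigma}_0)^2 = \dot{\widetilde{\Sigma}}_t\,\widetilde{\Sigma}_t^{-1}\dot{\widetilde{\Sigma}}_t$, so the curve solves the geodesic equation $\ddot{\Sigma} = \dot{\Sigma}\,\Sigma^{-1}\dot{\Sigma}$ of the affine-invariant metric and has constant speed. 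Pushing forward, $\Sigma_t = \Sigma_1^{1/2}\widetilde{\Sigma}_t\Sigma_1^{1/2} = \Sigma_1^{1/2}\left(\Sigma_1^{-1/2}\Sigma_0\Sigma_1^{-1/2}\right)^{1-t}\Sigma_1^{1/2}$, which is exactly the claimed formula, and it is again a geodesic since isometries preserve geodesics.

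The main obstacle is upgrading ``this is a geodesic'' to ``this is the \emph{shortest} path.'' I would close that gap by invoking that $\mathrm{SPD}(d)$ with the affine-invariant metric is a complete, simply connected Riemannian manifold of nonpositive sectional curvature --- a Cartan--Hadamard manifold --- on which any two points are joined by a \emph{unique} minimizing geodesic; hence the curve above is that shortest path. For a self-contained argument I would instead reduce one step further: conjugating by the orthogonal matrix that diagonalizes $\widetilde{\Sigma}_0$ (another isometry), the problem decouples over the eigenvalues into independent one-dimensional problems with line element $\tfrac12(d\log\lambda)^2$ on $\mathbb{R}_{>0}$, whose geodesics are visibly the straight lines in $\log\lambda$, i.e. $\lambda_t = \lambda_0^{\,1-t}$; one then shows, by projecting an arbitrary competitor path onto this eigenbasis and applying the triangle inequality coordinatewise, that nothing shorter exists. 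Either way, establishing genuine minimality rather than just the Euler--Lagrange equation is the step requiring the most care; the remainder is routine matrix calculus. I would also note that the formula recovers the covariance schedules of Eq.~\ref{eq:SPvar} and Eq.~\ref{eq:SP} on setting $\Sigma_1 = \mathrm{I}$ and relabelling $t \to t/T$, and point to \citet{pinele2020fisher} for an alternative route via the closed form of the Fisher--Rao distance.
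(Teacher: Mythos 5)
Your proposal is correct, and its computational core coincides with the paper's: both exploit invariance of the Fisher geometry under congruences $\Sigma\mapsto A\Sigma A^{\dagger}$ to reduce to the case $\Sigma_1=\mbox{I}$ (the appendix does this in one step via the simultaneous diagonalization $\Sigma_0=FDF^\dagger$, $\Sigma_1=FF^\dagger$ with $F=\Sigma_1^{1/2}U$, which is exactly your whitening by $\Sigma_1^{-1/2}$ followed by an orthogonal diagonalization), and both arrive at the matrix-power curve. Where you differ is in how the curve is obtained and, more importantly, in what is actually established. The paper derives the geodesic equation $\ddot\Sigma=\dot\Sigma\Sigma^{-1}\dot\Sigma$ from the Euler--Lagrange equation of the length functional and then solves it after diagonalizing the boundary data; you instead write down the candidate $\widetilde\Sigma_t=\widetilde\Sigma_0^{\,1-t}$ and verify by direct differentiation that it satisfies that equation with constant speed. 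The genuinely distinct (and valuable) part of your argument is the minimality step: the paper stops at a stationary point of the length functional and tacitly assumes both that this stationary point is minimizing and that the solution with diagonal boundary data stays diagonal, whereas your appeal to the Cartan--Hadamard structure of the affine-invariant metric on the SPD cone (complete, simply connected, nonpositive curvature, hence a unique minimizing geodesic between any two points) settles both issues at once. Your alternative ``self-contained'' route via projection onto the diagonal and a coordinatewise triangle inequality is only sketched and would need the distance-nonincreasing property of that projection to be proved carefully, so if you include it, spell that step out; otherwise the Cartan--Hadamard argument is the cleaner way to close the gap the paper leaves open. One minor point to add in either version: the argument implicitly needs $\Sigma_0$ to be positive definite as well (as does the paper's use of the simultaneous diagonalization theorem), since otherwise $\log\widetilde\Sigma_0$ is undefined and the path has infinite length.
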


\begin{remark} \label{rem:identity}
In the special case of $\Sigma_1=\mbox{\emph{I}}$, theorem \ref{th:sp} implies that
\begin{equation}
\Sigma_t=\Sigma_0^{1-t}
\end{equation}
Therefore the eigenvectors of $\Sigma_t$ are equal to those of $\Sigma_0$, and the eigenvalues of $\Sigma_t$ are equal to those of $\Sigma_0$ raised to the power of $1-t$.  
Denoting by $\sigma_x^2$ and $\sigma_y^2$ any pair of eigenvalues of $\Sigma_t$, along the shortest path they satisfy the following equation 
\begin{equation}\label{eq:eigs}
    \sigma_y^2=\left(\sigma_x^2\right)^{\alpha}
\end{equation}
where $\alpha$ depends on the eigenvalues at $t=0$.
Shortest paths are illustrated in figure \ref{fig:short-pathes}.

\begin{figure}[t]
    \centering
    \includegraphics[width=0.6\linewidth]{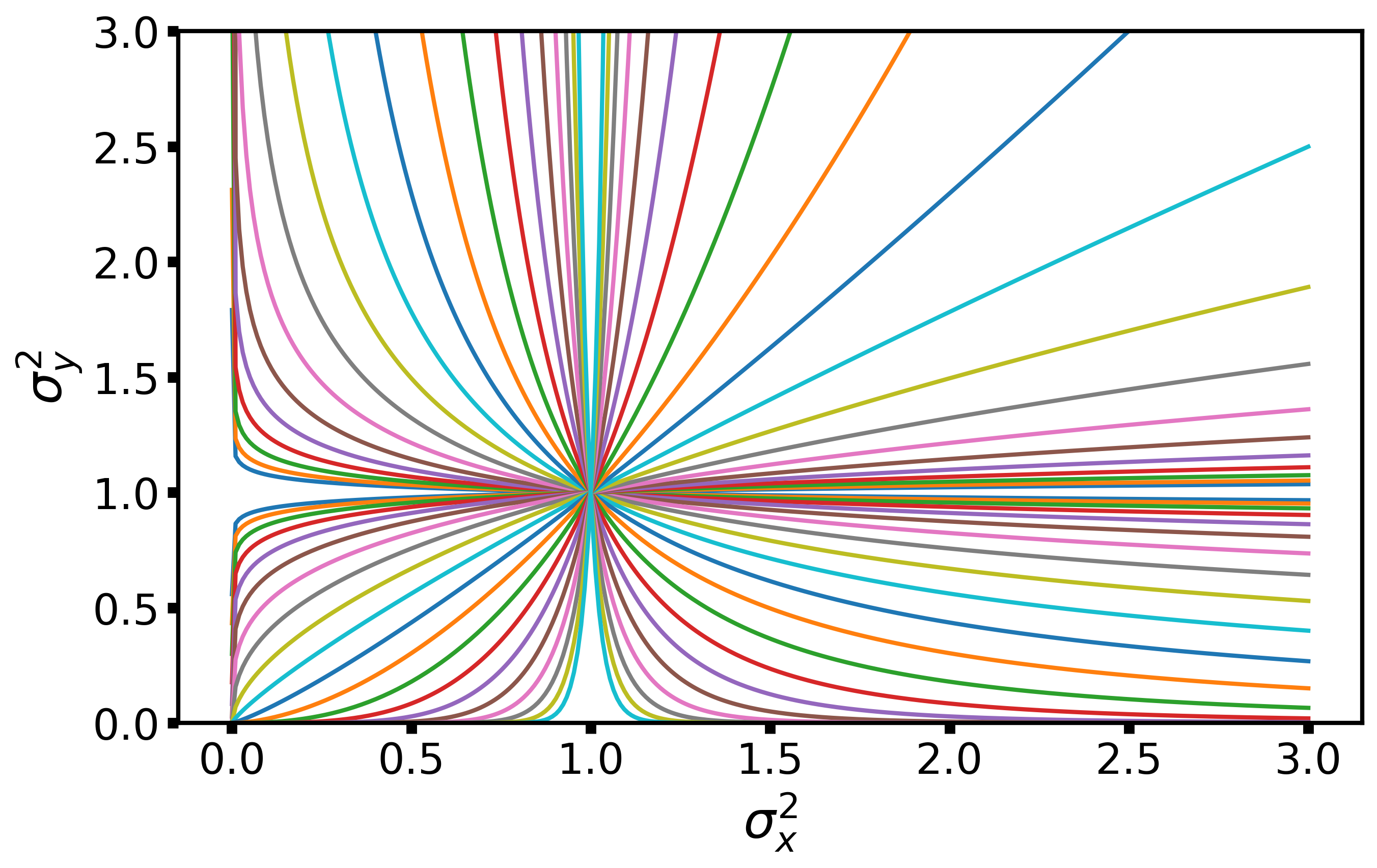}
    \caption{\textbf{Shortest path between Gaussians using Fisher metric.} Eigenvalues $\sigma_x^2$ and $\sigma_y^2$ of the covariance matrix follow equation \ref{eq:eigs}.}
    \label{fig:short-pathes}
\end{figure}

\end{remark}

\begin{proof}

Theorem \ref{th:sp} is discussed in \citet{pinele2020fisher} and references therein, here we provide an alternative derivation for completeness.
The length of a curve is described by the sum of the lengths of its infinitesimal segments, where the length of a given segment $d\mathbf{x}$ is equal to its Eucledian norm $\left|d\mathbf{x}\right|$.
The length of a curve $\mathbf{x}(s)\in\mathbb{R}^d$, parameterized by a scalar $s$, is equal to the line integral
\begin{equation}
    \mathcal{D} = \int_{s_0}^{s_1} ds \left|\frac{d\mathbf{x}}{ds}\right|= \int_{s_0}^{s_1} ds\;\left(\sum_{i=1}^{d}\frac{dx_i}{ds}^2\right)^{1/2}
\end{equation}
where the initial and final points of the curve are represented by $\mathbf{x}(s_0)$ and $\mathbf{x}(s_1)$, respectively. 
When measuring distances using a non-Euclidean (Riemannian) metric tensor $\mathcal{I}\in\mathbb R^{d\times d}$, the length of the curve is equal to
\begin{equation}
    \mathcal{D} = \int_{s_0}^{s_1} ds\; \left(\sum_{i,j=1}^{d}\frac{dx_i}{ds}\mathcal{I}_{ij}\frac{dx_j}{ds}\right)^{1/2}
\end{equation}
In the case of Gaussian distributions with zero mean, the curve is represented by the covariance $\Sigma(s)\in\mathbb R^{d\times d}$.
The Riemannian metric is given by the tensor $\mathcal{I}\in\mathbb R^{(d\times d)\times (d\times d)}$ and the length of the curve is given by
\begin{equation}
    \mathcal{D} = \int_{s_0}^{s_1} ds\; \left(\sum_{i,j,k,l}^{1,d}\frac{d\Sigma_{ij}}{ds}\mathcal{I}_{ij,kl}\frac{d\Sigma_{kl}}{ds}\right)^{1/2}
\end{equation}
We use the Fisher information matrix as the metric tensor.
For a Gaussians distribution with zero mean and covariance $\Sigma$, the Fisher information matrix is equal to
\begin{equation}
F_{ij,kl}=\frac{1}{2}\left(\Sigma^{-1}\right)_{il}\left(\Sigma^{-1}\right)_{jk}
\end{equation}
Thus, the length of the curve is equal to
\begin{equation}
\label{eq:D}
    \mathcal{D} = \frac{1}{\sqrt{2}}\int_{s_0}^{s_1} ds\; \text{Tr}\left(\Sigma^{-1}\frac{d\Sigma}{ds}\Sigma^{-1}\frac{d\Sigma}{ds}\right)^{1/2}
\end{equation}
It is important to note that, given the properties of the Fisher metric, the length of the curve is \emph{reparameterization invariant}. 
This means that it depends only on the distribution and not on how the distribution is parameterized \cite{amari2016information}.

Our aim is to find the curve of shortest length, namely the curve $\Sigma(s)$ that minimizes equation \ref{eq:D}. 
All stationary points of \ref{eq:D} satisfy the Euler-Lagrange equation, which is given by \cite{fox1987introduction}
\begin{equation} \label{eq:eu-lag}
    \frac{\partial \mathcal{L}}{\partial \Sigma} = \frac{d}{ds}\frac{\partial\mathcal{L}}{\partial\dot{\Sigma}}
\end{equation}
where $\dot{\Sigma} = \frac{d\Sigma}{ds}$ and the Lagrangian is equal to 
\begin{equation}
\mathcal{L}(\Sigma(s), \dot{\Sigma}(s)) = \frac{1}{\sqrt{2}} \text{Tr}(\Sigma^{-1}\dot{\Sigma}\;\Sigma^{-1}\dot{\Sigma})^{1/2}\end{equation}
After taking gradients of the Lagrangian with respect to $\Sigma$ and $\dot{\Sigma}$, equation \ref{eq:eu-lag} becomes
\begin{equation} \label{eq:lag-cov}
    -\frac{1}{2\mathcal{L}}\Sigma^{-1}\dot{\Sigma}\Sigma^{-1}\dot{\Sigma}\Sigma^{-1} = \frac{d}{ds}\left(\frac{1}{2\mathcal{L}}\Sigma^{-1}\dot{\Sigma}\Sigma^{-1}\right)
\end{equation}
This equation can be simplified by a sequence of steps.
First, we multiply both sides by the scalar $\frac{2}{\mathcal{L}}$, and we obtain
\begin{equation}
    -\Sigma^{-1}\frac{d\Sigma}{\mathcal{L}ds}\Sigma^{-1}\frac{d\Sigma}{\mathcal{L}ds}\Sigma^{-1}=\frac{d}{\mathcal{L}ds}\left(\Sigma^{-1}\frac{d\Sigma}{\mathcal{L}ds}\Sigma^{-1}\right)
\end{equation}
Second, we make the change of variable $dt = \mathcal{L}ds$ and we obtain
\begin{equation}
\label{eq:ao}
    -\Sigma^{-1}\frac{d\Sigma}{dt}\Sigma^{-1}\frac{d\Sigma}{dt}\Sigma^{-1}=\frac{d}{dt}\left(\Sigma^{-1}\frac{d\Sigma}{dt}\Sigma^{-1}\right)
\end{equation}
Here we slightly abuse notation since $\Sigma$ is now a function of $t$ instead of $s$.
Note that the Lagrangian $\mathcal{L}$ depends on $s$ through $\Sigma$ and $\dot{\Sigma}$, therefore, $t$ is a nonlinear function of $s$ in general.
Third, we use the identity $d\left(\Sigma^{-1}\right)=-\Sigma^{-1}(d\Sigma)\Sigma^{-1}$, and we multiply both sides of equation \ref{eq:ao} by $\Sigma$.
We arrive at
\begin{equation}
\label{eq:spdiffeq}
    \frac{d^2\Sigma}{dt^2} = \frac{d\Sigma}{dt}\Sigma^{-1}\frac{d\Sigma}{dt}
\end{equation}
The shortest path can be found by solving this second-order differential equation with boundary conditions $\Sigma(t_0)=\Sigma_0$ and $\Sigma(t_1)=\Sigma_1$.

The problem can be further simplified by noting that equation \ref{eq:spdiffeq} is invariant for congruent transformations
\begin{equation}
\label{eq:cong}
\Sigma(t)=F\Sigma'(t) F^\dagger
\end{equation}
where $F$ is any non-singular matrix, which does \emph{not} have to be orthogonal.
Remarkably, we can choose the matrix $F$ in a way that $\Sigma'(t)$ is diagonal along the entire path, from beginning to end.
As proved by Theorem 7.6.4 in \cite{horn13}, given two positive definite matrices $\Sigma_0$ and $\Sigma_1$, there is a non-singular matrix $F$ and diagonal matrix $D$, such that
\begin{align}\label{eq:boundry}
    \Sigma_0&=FDF^\dagger\\
    \Sigma_1&=FF^\dagger
\end{align}
The matrix $F$ is given by $F=\Sigma_1^{1/2}U$, where the columns of $U$ are the orthogonal eigenvectors of the matrix $\Sigma_1^{-1/2}\Sigma_0\Sigma_1^{-1/2}$, and $D$ is the diagonal matrix of its eigenvalues.
It is straightforward to verify that $FF^\dagger=\Sigma_1^{1/2}UU^\dagger\Sigma_1^{1/2}=\Sigma_1$.
Furthermore, we have that $FDF^\dagger=\Sigma_1^{1/2}UDU^\dagger\Sigma_1^{1/2}=\Sigma_1^{1/2}\Sigma_1^{-1/2}\Sigma_0\Sigma_1^{-1/2}\Sigma_1^{1/2}=\Sigma_0$.

Under the congruent transformation \ref{eq:cong}, the shortest path equation \ref{eq:spdiffeq} remains invariant
\begin{equation}
\label{eq:spdiffeq2}
    \frac{d^2\Sigma'}{dt^2} = \frac{d\Sigma'}{dt}\Sigma'^{-1}\frac{d\Sigma'}{dt}
\end{equation}
However, boundary conditions are now diagonal, namely
\begin{align}\label{eq:boundry-fs}
    \Sigma'(t_0)&=D\\
    \Sigma'(t_1)&=\mbox{I}
\end{align}
therefore equation \ref{eq:spdiffeq2} reduces to a set of independent scalar equations, one equation for each term in the diagonal of $\Sigma'$.
The solution is equal to 
\begin{equation}
\label{eq:sigmaprime}
    \Sigma'(t)=D^{(t_1-t)/(t_1-t_0)}
\end{equation}
Note that the exponent $(t_1-t)/(t_1-t_0)$ measures precisely the relative distance travelled along the path.
In fact, since by definition $dt = \mathcal{L}ds$, then $t_1-t_0$ measures the total length $\mathcal{D}$
\begin{equation}
    t_1-t_0 = \int_{s_0}^{s_1} ds\;\mathcal{L}(\Sigma(s), \dot{\Sigma}(s))=\mathcal{D}
\end{equation}
We do not compute the value of $\mathcal{D}$, instead we rescale $t$ and use it to measure the \emph{relative} distance travelled along the path.
Therefore, we rewrite \ref{eq:sigmaprime} as  $\Sigma'(t)=D^{1-t}$ and the solution for $\Sigma$ is equal to
\begin{align}
    \Sigma(t)&=FD^{1-t}F^\dagger=\Sigma_1^{1/2}UD^{1-t}U^\dagger\Sigma_1^{1/2}=\\
    &=\Sigma_1^{1/2}\left(\Sigma_1^{-1/2}\Sigma_0\Sigma_1^{-1/2}\right)^{1-t}\Sigma_1^{1/2}
\end{align}

\end{proof}

\section{Forward process}
\label{app:forward process}

\begin{theorem}
Given a random vector $\mathbf{x}_0$ of zero mean and covariance $\Sigma_0$, and another random vector ${\boldsymbol \epsilon}_t$ of zero mean and covariance $\mbox{\emph{I}}$ (isotropic), where $\mathbf{x}_0$ and ${\boldsymbol \epsilon}_t$ are uncorrelated.
Assume the matrix $(\mbox{I}-\Sigma_0)$ is invertible.
Define the matrix $\Phi_t = (\mbox{I} -\Sigma_0^{1-t})(\mbox{I}-\Sigma_0)^{-1}$ and note that, for $t\in(0,1)$, $\Phi_t$ and $(\mbox{\emph{I}}-\Phi_t)$ are positive definite and, respectively, monotonically decreasing and increasing functions of $\Sigma_0$.
Then, the corrupted vector $\mathbf{x}_t$ defined by
\begin{equation}
\label{eq:corrupt}
    \mathbf{x}_t = \Phi_t^{\frac{1}{2}} \mathbf{x}_0 + (\mbox{I}-\Phi_t)^{\frac{1}{2}} {\boldsymbol \epsilon}_t
\end{equation}
has zero mean and covariance equal to $\Sigma_0^{1-t}$.

\end{theorem}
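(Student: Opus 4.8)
The plan is a direct second-moment computation that exploits the fact that every matrix appearing in equation \ref{eq:corrupt} --- namely $\Phi_t$, $\mbox{I}-\Phi_t$, $\Sigma_0$, $\Sigma_0^{1-t}$ and all of their square roots --- is a function of the single symmetric positive semidefinite matrix $\Sigma_0$, so that they all commute and can be simultaneously diagonalized. Concretely, write $\Sigma_0 = Q\Lambda Q^\top$ with $Q$ orthogonal and $\Lambda=\mathrm{diag}(\lambda_i)$; then $\Phi_t = Q\,\mathrm{diag}\!\big((1-\lambda_i^{1-t})/(1-\lambda_i)\big)\,Q^\top$, and every claim below can be checked eigenvalue by eigenvalue.

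First I would dispatch the mean: $\mathbf{x}_t$ is a fixed linear image of the zero-mean vectors $\mathbf{x}_0$ and ${\boldsymbol \epsilon}_t$, so linearity of expectation gives $\mathbb{E}[\mathbf{x}_t]=\mathbf{0}$ at once.

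Next, for the covariance I would expand $\mathbb{E}\big[\mathbf{x}_t\mathbf{x}_t^\top\big]$. The two cross terms carry a factor $\mathbb{E}[\mathbf{x}_0{\boldsymbol \epsilon}_t^\top]=0$ (since $\mathbf{x}_0$ and ${\boldsymbol \epsilon}_t$ are uncorrelated and centered), so only the two ``diagonal'' contributions survive and the covariance equals $\Phi_t^{1/2}\Sigma_0\Phi_t^{1/2} + (\mbox{I}-\Phi_t)^{1/2}\,\mbox{I}\,(\mbox{I}-\Phi_t)^{1/2} = \Phi_t^{1/2}\Sigma_0\Phi_t^{1/2} + (\mbox{I}-\Phi_t)$. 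Using commutativity, $\Phi_t^{1/2}\Sigma_0\Phi_t^{1/2}=\Sigma_0\Phi_t$, so the covariance is $\Sigma_0\Phi_t + \mbox{I} - \Phi_t = \mbox{I} - \Phi_t(\mbox{I}-\Sigma_0)$. Substituting $\Phi_t=(\mbox{I}-\Sigma_0^{1-t})(\mbox{I}-\Sigma_0)^{-1}$ makes the factor $(\mbox{I}-\Sigma_0)$ cancel, leaving $\mbox{I}-(\mbox{I}-\Sigma_0^{1-t})=\Sigma_0^{1-t}$, as claimed.

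There is essentially no hard step here; the only points needing care are bookkeeping ones. One must confirm that the square roots in equation \ref{eq:corrupt} are well defined, i.e. that $\Phi_t$ and $\mbox{I}-\Phi_t$ are positive semidefinite: at the level of eigenvalues this reduces to checking that $x\mapsto (1-x^{1-t})/(1-x)$ sends every $\lambda_i\ge 0$ (with $\lambda_i\neq 1$, which is exactly the hypothesis that $\mbox{I}-\Sigma_0$ is invertible) into $[0,1]$, which follows from $x^{1-t}\ge x$ on $[0,1)$ and $x^{1-t}\le x$ on $(1,\infty)$ for $1-t\in(0,1)$; the same estimate also yields the stated monotonicity of $\Phi_t$ and $\mbox{I}-\Phi_t$ as functions of $\Sigma_0$. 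One should also note that $\Sigma_0^{1-t}$ is meaningful because $\Sigma_0$ is positive semidefinite. I expect the simultaneous-diagonalization observation to be the only thing worth stating explicitly; once it is in place, the identity collapses in a single line.
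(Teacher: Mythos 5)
Your proposal is correct and follows essentially the same route as the paper's own proof: zero mean by linearity, expansion of the second moment with the cross terms killed by uncorrelatedness, and the observation that $\Phi_t$ commutes with $\Sigma_0$ so that $\Phi_t^{1/2}\Sigma_0\Phi_t^{1/2}+(\mbox{I}-\Phi_t)=\mbox{I}-\Phi_t(\mbox{I}-\Sigma_0)=\Sigma_0^{1-t}$. Your eigenvalue-by-eigenvalue check that $\Phi_t$ and $\mbox{I}-\Phi_t$ are positive semidefinite (so the square roots are well defined) is a small addition that the paper merely asserts in the theorem statement, and it is carried out correctly.
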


\begin{proof}

The mean of $\mathbf{x}_t$ is zero, because the mean of both $\mathbf{x}_0$ and ${\boldsymbol \epsilon}_t$ are zero.
The covariance of $\mathbf{x}_t$ can be calculated from equation \ref{eq:corrupt} by computing $\mathbf{x}_t\mathbf{x}_t^\dagger$ and averaging.
Note that $\mathbf{x}_0$ and ${\boldsymbol \epsilon}_t$ are uncorrelated.
Then, the covariance is equal to
\begin{equation}\label{eq:forward-var} 
    \Sigma_t = \Phi_t^{\frac{1}{2}} \Sigma_0 \Phi_t^{\frac{1}{2}} + (\mbox{I}-\Phi_t)
\end{equation}
Using the expression of $\Phi_t = (\mbox{I} -\Sigma_0^{1-t})(\mbox{I}-\Sigma_0)^{-1}$, we note that $\Phi_t$ and $\Sigma_0$ commute, therefore the covariance can be rewritten as
\begin{equation}\label{eq:forward-var-2} 
    \Sigma_t = \Phi_t \Sigma_0 + (\mbox{I}-\Phi_t)=\mbox{I}-\Phi_t(\mbox{I}-\Sigma_0)=\Sigma_0^{1-t}
\end{equation}
%

\end{proof}

\section{Time complexity of power spectrum}
\label{app:ps-time}

The time complexity of DFT and fit of the power spectrum are, respectively, quasilinear ($O(d\log(d))$) and linear ($O(d)$) in the number of pixels $d$. In our experiments, it took a few minutes to obtain the constants $c_1$ and $c_2$ for CIFAR. We also computed the power spectrum of ImageNet 64x64 on a CPU in about one hour. In both cases, the time required to calculate these hyperparameters is much less than the time required for training. Because the complexity of the power spectrum is not worse than the complexity of training, we expect the computation time of the former to be always much smaller even for datasets with higher resolution. 

Concerning the dataset size $n$, the complexity of DFT scales linearly with $n$, while the complexity of the fit of the power spectrum does not depend on the dataset size. Therefore, we do not expect this to be the limiting factor for our method. Furthermore, for much larger datasets, an estimate of the power spectrum may be obtained from a subset of the data.

\section{Generated images}
\label{app:gen images}

\begin{figure*}[th]
    \centering
    \includegraphics[width=\linewidth]{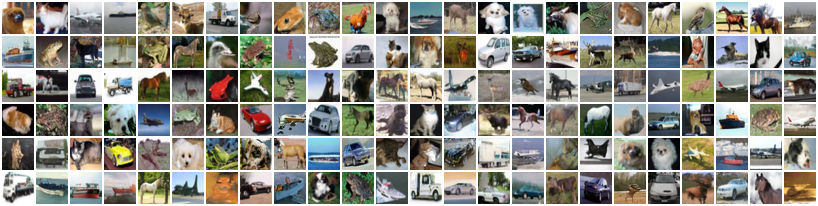}
    \vskip 0.1in
    \includegraphics[width=\linewidth]{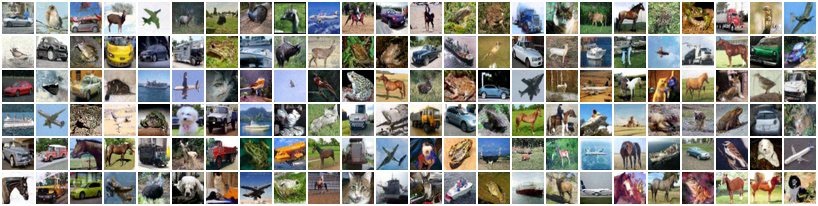}
    \vskip 0.1in
    \includegraphics[width=\linewidth]{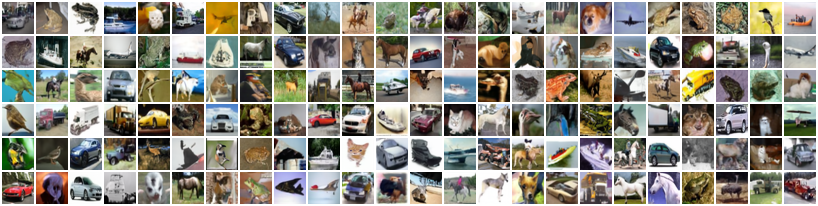}
    \caption{\textbf{Example generated images at T=500 diffusion steps for models trained on CIFAR10}. Top to bottom: SPD (FID = $2.74$), iDDPM \cite{iddpm} (FID = $4.20$) and iDDPM+DDIM \cite{song2020denoising} (FID = $3.74$).}
    \label{fig:qual_results}
\end{figure*}

\begin{figure}[t]
    \centering
    \includegraphics[width=\linewidth]{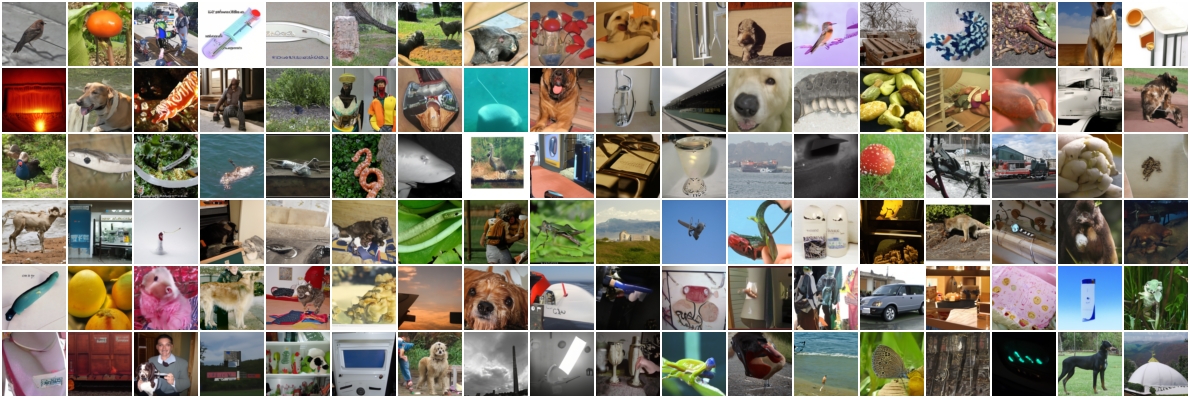}
    \caption{\textbf{Example generated images for model trained on ImagNet 64x64}. The model is trained at T=1000 diffusion steps and the evaluated FID is $13.7$.}
    \label{fig:IN64images}
\end{figure}

\end{document}